\def\arXiv{1} 
\newcommand{\notarxiv}[1]{foo}
\newcommand{\arxiv}[1]{ba}
	\renewcommand{\arxiv}[1]{#1}%
	\renewcommand{\notarxiv}[1]{}%
	\renewcommand{\arxiv}[1]{}%
	\renewcommand{\notarxiv}[1]{#1}%
\newtheorem{lemma}{Lemma}
\newtheorem{corollary}{Corollary}
\newtheorem{remark}{Remark}
\newenvironment{proof-sketch}{\noindent{\bf Sketch of Proof}\hspace*{1em}}{\qed\bigskip}
\newtheorem{assumption}{Assumption}
\newcommand*{\propenum}[1]{%
	\expandafter\@propenum\csname c@#1\endcsname%
}
\newcommand*{\@propenum}[1]{%
	$\ifcase#1\or1\or1'\or2\or3\or42%
	\else\@ctrerr\fi$%
}
\AddEnumerateCounter{\propenum}{\@propenum}{1}
\DeclarePairedDelimiter{\norm}{\|}{\|}
\NewDocumentCommand\Ex{s O{} m }{%
	\mathbb{E}%
	\begingroup
	\IfBooleanTF{#1}
	{\ExInn*{#3}}
	{\ExInn[#2]{#3}}%
	\endgroup
}
\DeclarePairedDelimiterX\ExInn[1]{[}{]}{%
	\activatebar
	#1%
}
\RenewDocumentCommand\Pr{sO{}r()}{%
	\mathbb{P}%
	\begingroup
	\IfBooleanTF{#1}
	{\PrInn*{#3}}
	{\PrInn[#2]{#3}}%
	\endgroup
}
\DeclarePairedDelimiterX\PrInn[1](){%
	\activatebar
	#1%
}
\newcommand{\activatebar}{%
	\begingroup\lccode`~=`|
	\lowercase{\endgroup\def~}{\;\delimsize\vert\;}%
	\mathcode`|=\string"8000
}
\newcommand{\linf}[1]{\norm{#1}_\infty} %
\newcommand{\normbigg}[1]{\bigg\|{#1}\bigg\|} %
\newcommand{\lonebigg}[1]{\normbigg{#1}_1} %
\newcommand{\linfbigg}[1]{\normbigg{#1}_\infty} %
\newcommand{\norms}[1]{\|{#1}\|} %
\newcommand{\lones}[1]{\norms{#1}_1} %
\newcommand{\R}{\mathbb{R}} %
\newcommand{\E}{\mathbb{E}} %
\renewcommand{\P}{\mathbb{P}}	%
\let\oldnl\nl%
\newcommand{\nonl}{\renewcommand{\nl}{\let\nl\oldnl}}%
\DeclareMathOperator*{\argmin}{arg\,min}
\newcommand{\defeq}{\coloneqq}
\newcommand{\xset}{\mathcal{X}}
\newcommand{\eps}{\epsilon}
\renewcommand{\O}[1]{O\left( #1 \right)}
\newcommand{\Otilb}[1]{\widetilde{O}\left( #1 \right)}
\NewDocumentCommand{\prox}{ m m 
O{\alpha}}{\mathrm{Prox}_{#1}^{#3}(#2)}
\newcommand{\veps}{\varepsilon}
\newcommand{\calM}[0]{\mathcal{M}}
\newcommand{\calA}[0]{\mathcal{A}}
\newcommand{\calS}[0]{\mathcal{S}}
\newcommand{\PP}[0]{\mathbf{P}}
\newcommand{\0}[0]{\mathbf{0}}
\newcommand{\M}[0]{\mathbf{M}}
\newcommand{\SSS}[0]{\mathbf{S}}
\newcommand{\xx}[0]{\mathbf{x}}
\newcommand{\vv}[0]{\mathbf{v}}
\newcommand{\rr}[0]{\mathbf{r}}
\newcommand{\qq}[0]{\mathbf{q}}
\newcommand{\nnu}[0]{{\boldsymbol{\nu}}}
\newcommand{\ee}[0]{\mathbf{e}}
\newcommand{\tmix}{t_{\mathrm{mix}}}
\newcommand{\1}{\mathbf{1}}
\newcommand{\calI}{\mathcal{I}}
\newcommand{\A}{\mathrm{A_{tot}}}
\newcommand{\pp}{\mathbf{p}}
\newcommand{\MM}{\mathbf{M}}
\newcommand{\NN}{\mathbf{N}}
\newcommand{\q}{\mathbf{q}}
\title{Towards Tight Bounds on the Sample Complexity\\ of Average-reward MDPs}
\author{
Yujia Jin \\
Stanford University \\
\texttt{\href{mailto:yujiajin@stanford.edu}{yujiajin@stanford.edu}}
\and
	Aaron Sidford \\
Stanford University \\
\texttt{\href{mailto:sidford@stanford.edu}{sidford@stanford.edu}}
}
\date{}
\begin{document}

\maketitle

\begin{abstract}
We prove new upper and lower bounds for sample complexity of finding an $\eps$-optimal policy of an infinite-horizon average-reward Markov decision process (MDP) given access to a generative model. When the mixing time of the probability transition matrix of all policies is at most $\tmix$, we provide an algorithm that solves the problem using $\widetilde{O}(\tmix \eps^{-3})$ (oblivious) samples per state-action pair. Further, we provide a lower bound showing that a linear dependence on $\tmix$ is necessary in the worst case for any algorithm which computes oblivious samples. We obtain our results by establishing connections between infinite-horizon average-reward MDPs and discounted MDPs of possible further utility. 
\end{abstract}

\newpage
\tableofcontents
\newpage

\section{Introduction}

In this paper we consider the fundamental problem of computing an approximately optimal policy in a Markov decision process (MDP) given by a generative model. We consider a standard MDP model with a known set of states and actions. If an agent chooses an action at a given state, a known reward is immediately given to the agent and the agent probabilistically transitions to a new state by an (unknown) fixed distribution (as a function of the state-action pair). Given access to a \emph{generative model}~\citep{K03}, i.e. an oracle which when queried by a state-action pair returns an independent sample from the distribution over next states, our goal is to find a policy, i.e. a choice of action per state, that approximately maximizes a measure of (cumulative) reward over time, e.g.  discounted reward, average reward, etc.

Solving MDPs with a generative model is a fundamental problem in learning theory and a classic model for decision making under uncertainty and reinforcement learning~\citep{puterman2014markov,sutton2018reinforcement}. It is a prominent theoretical test-bed for learning algorithms and has been studied extensively over past 20 years~\citep{K03}. Multiple algorithms have been proposed for the problem~\citep{puterman2014markov} and there has been extensive research on improving the sample complexity for finding an approximately optimal policy~\citep{W17d, SWWYY18, W19, agarwal2019optimality}.
 
In certain settings, the sample complexity of the problem is settled. For example, consider discounted  MDPs (DMDPs) where the goal is to minimize the $\gamma$-discounted reward, i.e. the sum of the rewards where the reward at step $t\ge 0$ is discounted by $\gamma^t\in[0,1)$. For a DMDP with $\A$ total state-action pairs, it is known that $\widetilde{\Omega}(\A/(1-\gamma)^3\eps^2)$~\footnote{Many results in literature instead consider MDPs where at each state there are a fixed $A$ actions and use $|\calS|A$ to denote the total number of state-action pair in their result. These results typically generalize to non-uniform action set among states, and thus we replace this total number of state-action pair with $\A$ throughout.} samples are necessary to find an $\eps$-optimal policy~\cite{azar2013minimax,feng2019does} and there are algorithms which solve the problem with near-optimal $\widetilde{O}(\A/(1-\gamma)^3\eps^2)$ samples~\cite{SWWYY18,agarwal2019optimality,li2020breaking} for certain ranges of $\eps$. Similarly, in the case of finite-horizon MDPs, again optimal sample and query complexities are known~\cite{SWWYY18} for sufficiently small $\epsilon$.
 
Another popular class of MDPs are infinite-horizon average-reward MDPs~(AMDPs)~\cite{mahadevan1996average}. Here the reward function (also known as the \emph{gain} for AMDPs) is the infinite-horizon average reward and it is assumed that for any given policy the transition matrix it induces has a mixing time bounded by $\tmix > 0$. AMDPs arise naturally in controlling computer systems and communication networks, where a controller makes frequent decisions, and for inventory systems with frequent restocking decisions~\cite{puterman2014markov}.  However, despite advancements in the theory of DMDPs and multiple proposed algorithms~\cite{W17m,jin2020efficiently}, the optimal sample complexity of AMDPs has resisted similar characterization.The best known sample complexity is $\widetilde{O}(\A\tmix^2/\eps^2)$ \cite{jin2020efficiently} and there is no known lower-bound. 

In comparison to existing near-optimal DMDP methods for  generative models~\citep{K03}, recent methods for provably solving AMDPs suffer from two additional limitations.%
~First, they work with randomized policies, i.e. ones which choose a distribution over actions in every state. Correspondingly, these methods require stronger condition of mixing bound on all randomized policies.%
 Second, they use dynamic samples from the model, as opposed to the case for DMDPs where the sampling process can be completely oblivious, i.e. a fixed number of samples can be generated per state-action pair.

In this paper we make progress on these problems. First we provide a new method that finds an $\eps$-optimal deterministic stationary policy using $\widetilde{O}(\A\tmix/\eps^3)$ oblivious samples, thereby overcoming these limitations of previous methods. Further, we provide a lower bound showing that $\Omega(\A\tmix/\eps^2)$ oblivious samples are necessary. Consequently, we resolves the question of optimal dependence on $\tmix$ for oblivious sampling methods. We achieve these results by establishing a connection between infinite-horizon average-reward MDPs and discounted MDPs which may be of utility for further research in this area.

\subsection{Problem Setup}\label{sec:intro-setup}

We define a Markov decision process (MDP) as a tuple $\calM\defeq (\calS,\calA,\PP,\rr)$ with the following interpretations: 
\begin{itemize}
\item 	state space $\calS$ - a finite discrete set of states with size $|\calS|$
that the process transits on. In particular, $s\in\calS$ denotes a single state. 
\item  total action space $\calA$ - the union of all actions that an agent can take at any state $s\in \calS$, i.e. $\calA = \cup_{s\in\calS} \calA_s$, where $\calA_s$ is the action space at state $s$ (and the $\calA_s$ are disjoint). We denote the total number of state-action pairs as $\A = \sum_{s\in\calS}|\calA_s|$.
\item transition probability matrix  $\PP\in\R^{\calA\times \calS}$ - When choosing action $a_s \in \calA_s$ at state $s \in \calS$ the next state is chosen from the distribution $\pp_{s,a_s} = \PP((s,a_s),\cdot)\in\Delta^{\calS}$.  We assume that $\PP$ is unknown but can be queried by a \emph{generative model} which when queried at any state action pair $(s,a_s)$ outputs an independent sample from the distribution $\pp_{s,a_s}$.
\item reward vector $\rr\in [0,1]^{\calA}$ - the reward at any state $s$ when playing action $a_s$ is denoted as $r_{s,a_s}$. As is common practice we assume the reward doesn't depend on the state it transits to~\citep{SWWYY18}, and is bounded in $[0,1]$ (by uniform rescaling).
\end{itemize}
Given MDP $\calM$ at any state $s\in\calS$ an agent can take action $a_s\in\calA_s$ after which it receives instant reward $r_{s,a_s}$ and transits to some other state $s'$ with probability $\pp_{s,a_s}(s')$. A \emph{(deterministic stationary) policy} of an MDP is a mapping $\pi:\calS\rightarrow \calA$, i.e. it maps  each state $s\in\calS$ to a fixed action $\pi(s) \in \calA_s$. A \emph{randomized stationary policy} of an MDP is a mapping $\pi:\calS\rightarrow  \Delta^{\calA}$, i.e. it maps a state $s \in \calS$ to a fixed distribution over actions $\pi(s) \in \Delta^{\calA_s}$. Under a given fixed policy $\pi$ and initial distribution over states $\mathbf{q} \in \Delta^\calS$,
 the MDP generates a sample path $\{(s_1,a_1)\in(\calS\times\calA),(s_2,a_2)\in(\calS\times\calA),\cdots\}$ where $a_i$ is chosen from $\pi(s_i)$ and $s_{i+1}$ is chosen from each $\pp_{s_i,a_i}$. Following its sample path, it receives the (cumulative) reward $V_\mathbf{q}^\pi$ defined as 
\begin{equation}\label{eq:reward}
\begin{aligned}
V_\mathbf{q,\gamma}^\pi = 
 & \E^\pi\left[\sum_{t\ge 1}\gamma^{t-1}r_{s_t,a_t}|s_1\sim \mathbf{q}\right], & \gamma\text{-DMDPs};\\
V_\mathbf{q}^\pi = & 	\lim_{t\rightarrow \infty}\frac{1}{T}\E^\pi\left[\sum_{t\ge 1}r_{s_t,a_t}|s_1\sim \mathbf{q}\right], & \text{AMDPs},
\end{aligned}
\end{equation}
given $\mathbf{q}$ as the initial distribution over states. We consider discount factor $\gamma<1$ for DMDPs, and an average reward for AMDPs. For brevity,  under a policy $\pi$ we use $\PP^\pi\in\R^{\calS\times\calS}$ to denote the transition matrix of the underlying Markov chain where $\PP^\pi(s,s') = \sum_{a_s\in\calA_s}\pi_s(a_s)\pp_{s,a_s}(s')$, and $\rr^\pi\in[0,1]^\calS$ is the reward vector where $\rr^\pi(s) = \sum_{a_s\in\calA_s}\pi_s(a_s)r_{s,a_s}$. We define $\nnu^\pi$ to denote the stationary distribution of a transition matrix $\PP^\pi$ under policy $\pi$, i.e. $(\nnu^\pi)^\top\PP^\pi = \nnu^\pi$.

Our goal in solving an MDP is to find an optimal policy $\pi^\star$, that maximizes the cumulative reward. We say (stationary) policy $\pi$ is $\eps$-optimal, if
$V^\pi_\q\ge V^{\pi^\star}_\q-\eps$, for any initial distribution $\q \in \Delta^{\calS}$.
In this paper our main goal is to characterize the number of samples that need to be collected for each state-action pair (in the worst case) to find an $\eps$-optimal policy for AMDPs. We further restrict our attention to mixing AMDPs, which we define as those satisfying the following assumption.

\begin{assumption}
\label{assum}
	An AMDP instance is \emph{mixing} if for any policy $\pi$, there exists a stationary distribution $\nnu^\pi$ so that for any initial distribution $\mathbf{q}\in\Delta^\calS$, the induced Markov chain has mixing time bounded by $\tmix<\infty$ , where $\tmix$ is defined as 
	\begin{equation*}\label{def-mixingtime}
	\tmix\defeq\max_{\pi}\left[\argmin_{t\ge1 }\left\{\max_{\qq\in\Delta^\calS} \lones{({\PP^{\pi}}^{\top})^t\qq-\nnu^\pi}\le \tfrac{1}{2}\right\}\right].
	\end{equation*}
\end{assumption}

This is a natural and widely used regularity assumption~\citep{W17m,jin2020efficiently} for AMDPs to ensure the existence of cumulative reward of AMDPs in~\eqref{eq:reward}. Under Assumption~\ref{assum} one can show that $V^\pi = \langle\nnu^\pi, \rr^\pi\rangle$, where $\nnu^\pi$ is the stationary distribution over states $\calS$ under the given  policy $\pi$.  Thus, $V^\pi$ exists independent of $\qq$ for arbitrary policy $\pi$, allowing us to omit the subscript $\q$ for brevity. We remark that the assumption we use is weaker than the ones in~\citet{W17m,jin2020efficiently}, as we only assume mixing bounds for deterministic stationary policies, not the randomized ones.

\subsection{Results}\label{sec:intro-result}

The main result of the paper is the following upper and lower bounds for finding an $\eps$-optimal policy for mixing AMDPs assuming a generative model access.

\begin{restatable}{theorem}{ub}\label{thm:main-ub}
There exists an algorithm that, given a mixing AMDP	with $\A$ state-action pairs and mixing time bounded by $\tmix$ and accuracy parameter $\eps\in(0,1)$, finds an $\eps$-optimal deterministic policy with probability $1-\delta$ with $O(\A\log(\A/\eps\delta)\tmix/\eps^3)$ oblivious samples.
\end{restatable}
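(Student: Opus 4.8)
The plan is to reduce the average-reward problem to a discounted problem with a carefully chosen discount factor, solve the discounted problem using a known near-optimal DMDP algorithm, and then translate the guarantee back. The abstract promises a ``connection between infinite-horizon average-reward MDPs and discounted MDPs,'' so I would build the reduction explicitly. First I would relate the average gain $V^\pi = \langle \nnu^\pi, \rr^\pi\rangle$ to the $\gamma$-discounted value $V^\pi_{\q,\gamma}$. The standard fact is that $(1-\gamma) V^\pi_{\q,\gamma} \to V^\pi$ as $\gamma \to 1$, and under Assumption~\ref{assum} the mixing time controls the rate: I expect a bound of the form $\abs{(1-\gamma) V^\pi_{\q,\gamma} - V^\pi} = \O{(1-\gamma)\tmix}$, uniformly over policies and initial distributions $\q$. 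This is the key structural lemma, and I would prove it by writing the discounted value as a geometric average of the $t$-step reward vectors $(\PP^\pi{}^\top)^t \q$ and using the definition of $\tmix$ to show these converge geometrically (at rate $1/2$ every $\tmix$ steps) to $\nnu^\pi$.

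Given this comparison, I would set $1-\gamma \approx \eps/\tmix$ so that the discretization error from replacing the average-reward objective by the (rescaled) discounted objective is $\O{\eps}$. Then I would invoke a near-optimal DMDP solver: to obtain an $\eps'$-optimal policy for a $\gamma$-DMDP one needs $\widetilde{O}(\A/((1-\gamma)^3 \eps'^2))$ samples per state-action pair, as cited in the introduction. The subtlety is that the discounted \emph{values} here scale like $1/(1-\gamma) \approx \tmix/\eps$, so an additive error of $\eps$ in the average-reward scale corresponds to an additive error of roughly $\eps' \approx \eps/(1-\gamma) \approx \tmix$ in the raw discounted scale — equivalently, after the $(1-\gamma)$ rescaling, I need the discounted policy to be $\Theta(\eps)$-optimal on the rescaled objective. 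Tracking these scalings carefully, substituting $1-\gamma = \Theta(\eps/\tmix)$ and the required accuracy into the DMDP sample bound $\widetilde{O}(\A/((1-\gamma)^3 \eps'^2))$ should produce the claimed $\widetilde{O}(\A\tmix/\eps^3)$.

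Two additional points need care. First, the theorem demands a \emph{deterministic} policy, so I would use a DMDP solver whose output is a deterministic stationary policy (these exist, since DMDPs always admit optimal deterministic policies); then I must argue that a policy which is near-optimal for the discounted objective remains near-optimal for the average-reward objective, again via the comparison lemma applied to the specific returned policy rather than just to the optima. Second, the theorem asks for \emph{oblivious} samples: the reduction must be arranged so that all randomness is drawn up front as a fixed per-state-action sample budget, which is compatible with the oblivious, model-building DMDP estimators (e.g. forming an empirical $\what{\PP}$ from independent samples per pair and solving the empirical DMDP), rather than any adaptive sampling scheme.

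The hard part will be the comparison lemma and the bookkeeping of error scales. The naive bound $\abs{(1-\gamma)V^\pi_{\q,\gamma}-V^\pi}=\O{(1-\gamma)\tmix}$ must hold \emph{uniformly over all policies} to let me compare optima and returned policies simultaneously, and I must ensure that converting a discounted-optimality guarantee (with its $1/(1-\gamma)$-scale values and the empirical-model approximation error from finite samples) into an average-reward-optimality guarantee does not lose extra factors of $\tmix$ or $\eps$. Getting the exponents exactly right — so that the final count is $\eps^{-3}$ and linear in $\tmix$ rather than, say, $\tmix^2$ — is where the delicate balancing of $1-\gamma$ against the target accuracy must be done precisely.
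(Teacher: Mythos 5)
Your proposal is correct and follows essentially the same route as the paper: the uniform comparison bound $\linf{\bar{\vv}^\pi-\bar{\vv}^\pi_{\gamma}}\le O((1-\gamma)\tmix)$ (the paper's Lemma~\ref{lem-mixing-discounted-closeness}), the choice $1-\gamma=\Theta(\eps/\tmix)$ with required DMDP accuracy $\veps=\Theta(\eps/(1-\gamma))$ applied both to the returned policy and to the optima (Lemma~\ref{lem:redx-ub}), and a plug-in of an oblivious, model-based DMDP solver (Lemma~\ref{lem:solver}). The one item you file under ``bookkeeping'' is actually the crux the paper isolates in a dedicated remark: the accuracy you need is $\veps=\Theta(\tmix)$, which in the interesting regime $\tmix\gg 1/\eps$ exceeds the validity range of most near-optimal solvers --- \citet{SWWYY18} requires $\veps<1$ and \citet{agarwal2019optimality} requires $\veps<1/\sqrt{1-\gamma}$, and restricting to those ranges makes the reduction no better than the prior $\widetilde{O}(\A\tmix^2/\eps^2)$ bound --- so the argument goes through only with the solver of \citet{li2020breaking}, whose guarantee holds for all $\veps\in(0,1/(1-\gamma))$; your generic citation of the $\widetilde{O}(\A(1-\gamma)^{-3}\veps^{-2})$ bound needs to be pinned to that result for the claimed $\widetilde{O}(\A\tmix/\eps^3)$ to follow.
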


The prior state-of-the-art sample complexity for this problem is  $\widetilde{O}(\A\tmix^2/\eps^2)$  due to the primal-dual method~\cite{jin2020efficiently}. Our method compares favorably in the following settings:
\begin{itemize}
\item $\tmix\gg\omega(1/\eps)$: In certain large-scale settings, $\tmix$ may be large and increase with dimension and whereas $\epsilon \in (0,1)$ need not. In such settings, the improved dependence on $\tmix$ achieved by our method, at the expense of larger dependence on $1/\eps$, may be desirable. 
\item one only has access to oblivious samples: Our method is the first one that uses oblivious instead of dynamic samples, which can be easier to access and cheaper to collect. Note this also implies  our method has $O(1)$ depth and is desirable for parallel computing, improving the prior parallel method for this model in~\citet{tiapkin2021parallel}.
\item one only has mixing condition on deterministic stationary policies: Our mixing condition is weaker than the standard mixing condition considered in prior work~\cite{W17m,jin2020efficiently}, which requires a $\tmix<\infty$ mixing time bound for all randomized stationary policies.
\end{itemize}

We show that our upper bound on sample complexity for mixing AMDPs is tight up to logarithmic and poly-$\eps$ factors, by proving the lower bound on oblivious samples formally as follows. 

\begin{restatable}{theorem}{lb}\label{thm:main-lb}
There are constants  $\eps_0, \delta_0\in(0,1/2)$ such that for all $\eps\in(0,\eps_0)$ and any algorithm $\mathcal{K}$, which on input mixing AMDP $(\calS,\calA,\PP,\rr)$ given by a generative model outputs a policy $\pi$ satisfying $V^{\pi}\ge V^{\pi^\star}- \eps$ with probability at least $1-\delta_0$,  $\mathcal{K}$ makes at least $\Omega(\A\tmix/\eps^2)$ deterministic oblivious queries to the generative model on some instance with $\A$ total states and mixing time at most $\tmix$. 
 \end{restatable}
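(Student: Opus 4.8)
The plan is to prove Theorem~\ref{thm:main-lb} by a reduction to binary hypothesis testing, using a family of hard instances in which the entire difficulty is concentrated in a single, \emph{a priori} unknown ``dominant'' gadget. The key conceptual point is that, because the sampling schedule is oblivious (fixed in advance, independent of the observed transitions), the algorithm cannot locate this gadget and adapt to it, and so must allocate enough samples to \emph{every} candidate location; summing the per-location requirement over the $\Theta(\A)$ locations then yields the bound.

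\textbf{The hard family.} I would build instances $\calM_{\ell,\theta}$ indexed by a location $\ell$ and a hidden bit $\theta\in\{0,1\}$. The state space is partitioned into $N=\Theta(\A)$ two-state ``blocks'' $\{H_i,L_i\}$ (each $L_i$ carrying two actions, each $H_i$ one, so that $\A=\Theta(N)$), with reward $1$ on $H_\ell$ and $0$ elsewhere. In $\calM_{\ell,\theta}$ the block $\ell$ is made \emph{dominant}: $H_\ell$ self-loops with probability $1-2/\tmix$, moves to $L_\ell$ with probability $1/\tmix$, and with probability $1/\tmix$ ``scatters'' to a uniformly random state of another block (which returns to the dominant block in one step, keeping the chain ergodic). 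The state $L_\ell$ returns to $H_\ell$ with probability $(1\pm c\eps)/\tmix$ (self-looping otherwise), the sign being $+$ for the action designated good by $\theta$ and $-$ for the other; every block $j\neq\ell$ simply funnels back to the dominant block regardless of action. One checks that each such chain is ergodic with mixing time $\Theta(\tmix)$ (dominated by the $1/\tmix$-scale self-loops, so after rescaling constants the bound is $\le\tmix$) and that $\nnu^{\pi}_{H_\ell}\approx\tfrac12\pm\Theta(c\eps)$, so choosing the wrong action at $L_\ell$ reduces the gain $V^\pi=\langle\nnu^\pi,\rr^\pi\rangle$ by $\Theta(c\eps)\ge\eps$; crucially, the transition distribution of every queried pair other than $(L_\ell,a_0),(L_\ell,a_1)$ is independent of $\theta$.

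\textbf{Testing and counting.} For fixed $\ell$, distinguishing $\theta=0$ from $\theta=1$ amounts to distinguishing, at $L_\ell$, a $\mathrm{Bernoulli}((1+c\eps)/\tmix)$ transition from a $\mathrm{Bernoulli}((1-c\eps)/\tmix)$ one, whose per-sample KL divergence is $\Theta\!\big((\eps/\tmix)^2/(1/\tmix)\big)=\Theta(\eps^2/\tmix)$. Hence by tensorization of KL and Pinsker/Le~Cam, if the number $m_\ell$ of queries the schedule places on $\ell$'s two crucial pairs is below the threshold $\Theta(\tmix/\eps^2)$, the total-variation distance between the two induced sample laws is under $\tfrac12$, and no decoder can name $\theta$ (equivalently, pick the good action at $L_\ell$) with probability exceeding $\tfrac34$. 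To conclude I put the uniform prior $\ell\sim\mathrm{Unif}[N]$, $\theta\sim\mathrm{Unif}\{0,1\}$: if the total budget is $T<c' N\tmix/\eps^2$ for a small enough constant $c'$, then $\sum_\ell m_\ell=T$ forces at least half the blocks to fall below the testing threshold, on which the output errs at $L_\ell$ with probability $>\tfrac14$ and so incurs gain-loss $\ge\eps$; averaged over the family the algorithm is thus not $\eps$-optimal with probability $>\tfrac18>\delta_0$, hence fails on some instance. Since $N=\Theta(\A)$, this gives $T=\Omega(\A\tmix/\eps^2)$, and a final appeal to Yao's minimax principle extends the bound from deterministic to randomized oblivious schedules.

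\textbf{Main obstacle.} The delicate part is the construction, not the testing calculus. I must simultaneously (i) give one block $\Theta(1)$ stationary weight, so that its single decision moves the gain by $\Theta(\eps)$; (ii) keep the informative transition at the $\Theta(1/\tmix)$ probability scale, so that each sample carries only $\Theta(\eps^2/\tmix)$ information and the mixing time stays $\Theta(\tmix)$; and (iii) ensure both hold \emph{for every candidate location and every policy}, so that each $\calM_{\ell,\theta}$ is a legitimate mixing AMDP and $\theta$ is readable only from $\ell$'s crucial pairs. These requirements pull against one another --- indeed a short calculation shows the per-block sample requirement is exactly $\Theta(\tmix/\Delta^2)$ for a gain gap $\Delta$, maximized at weight $\Theta(1)$ and $\Delta=\Theta(\eps)$ --- and realizing all three at once, with verified mixing-time and reward-normalization bounds, is where the real work lies.
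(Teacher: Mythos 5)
Your proposal is correct (under the paper's definition of deterministic oblivious sampling, in which the per-pair sample counts are fixed regardless of the instance), but it takes a genuinely different route from the paper's proof. The paper uses an Assouad-style construction: $N$ first-level states, each with $K$ actions and each carrying $\Theta(1/N)$ of the stationary mass, so that every state poses an independent best-action identification problem whose resolution moves the gain by only $\Theta(\eps/N)$; an $\eps$-optimal policy must get a constant fraction of these $N$ decisions right (via a Markov-inequality counting step), and a Le Cam argument per state --- distinguishing self-loop probabilities differing by $\Theta(\eps(1-\gamma))$ at scale $\gamma\approx 1-1/\tmix$ --- costs $\Omega(K\tmix/\eps^2)$ samples each, so both the gain losses and the sample requirements accumulate additively. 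You instead concentrate all of the difficulty in a single hidden block of stationary weight $\Theta(1)$ and use obliviousness to force the schedule to cover every candidate location; the per-location two-point test is the same Le Cam/KL calculation at the same $\Theta(\eps^2/\tmix)$-per-sample information scale. Both arguments yield $\Omega(\A\tmix/\eps^2)$.

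Two caveats about what each approach buys. First, as written your rewards (reward $1$ on $H_\ell$ only) differ across instances, so the hidden location $\ell$ is readable from the known input $\rr$; your bound therefore holds only under the strict instance-independent notion of obliviousness, and would collapse if the schedule were allowed to depend on the given reward vector (the algorithm would read $\ell$ off $\rr$ and spend all $\Theta(\tmix/\eps^2)$ samples on the two crucial pairs). This is easily fixed by placing reward $1$ on every $H_i$: the gain is still dominated by $\nnu(H_\ell)$ since all non-dominant blocks have stationary mass $O(1/\tmix)$, and then all instances share one reward vector, matching the robustness of the paper's family, whose instances differ only in $\PP$. Second, your construction is intrinsically fragile to adaptivity: a single transition sample from any non-dominant pair reveals $\ell$ (its funneling target), so an adaptive sampler solves your instances with $O(\A+\tmix/\eps^2)$ samples. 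The paper's construction distributes the hidden information over $N$ independent sub-problems, which is why the authors view it as a plausible route to lower bounds against dynamic sampling (their Appendix B); for the theorem as stated, however, both arguments suffice.
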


\subsection{Approach}\label{sec:intro-approach}

Both our algorithm design and lower bound construction leverage ideas from research on the complexity of solving  DMDPs~\citep{azar2013minimax,SWWYY18,li2020breaking}. AMDPs are the limiting case for DMDPs with $\gamma\rightarrow 1$ and this connection has been leveraged previously to prove that Bellman equations are optimality conditions for AMDPs, obtain value iteration methods for AMDPs~\cite{puterman2014markov}, and efficiently compute stationary distributions~\cite{cohen2016faster}. However, to the best of our knowledge, quantitative application of this connection has not been performed prior for a finite $\gamma$ in studying the sample complexity of AMDPs given by a  generative model. We obtain our main theorems by bridging this gap and showing how to transfer between AMDPS and DMDPs with bounded losses in the value.

To obtain our upper bound, we prove a key fact relating AMDPs and DMDPs given the same MDP tuple $\calM\defeq (\calS,\calA,\PP,\rr)$. In Lemma~\ref{lem-mixing-discounted-closeness} we show that for any initial distribution $\qq$, the cumulative rewards as defined in~\eqref{eq:reward} satisfy $|V^\pi-(1-\gamma)V^\pi_{\qq,\gamma}|\le O((1-\gamma)\tmix)$ . This fact follows from carefully examining the matrix expressions of $V^\pi$ for AMDP and $(1-\gamma)V^\pi_{\qq,\gamma}$ for DMDP. It also utilizes the algebraic implications of the mixing property of transition matrices $\PP^\pi$. In prior work, \citet{cohen2016faster} used such algebraic techniques on mixing properties of directed graphs; and \citet{jin2020efficiently} used it on MDP analysis. With that lemma, we show that a $\gamma=1-\Theta(\eps/\tmix)$-discounted MDP well approximates an AMDP in terms of their (approximately) optimal policy, without incurring an error larger than order of $\eps$. Also, we show that it suffices to solve the corresponding DMDP to accuracy $\veps = \eps/(1-\gamma)$. By plugging in the sample complexity of most recent DMDP solvers~\citep{li2020breaking}, we thus obtain a sample complexity of 
\arxiv{
\begin{align*}
\widetilde{O}\left(\frac{\A}{(1-\gamma)^3\veps^2}\right)  \stackrel{\text{choice of }\veps}{=\joinrel=\joinrel=\joinrel=} \widetilde{O}\left(\frac{\A}{(1-\gamma)\eps^2}\right) \stackrel{\text{choice of }\gamma}{=\joinrel=\joinrel=\joinrel=} \widetilde{O}\left(\frac{\A\tmix}{\eps^3}\right).
\end{align*}}
\notarxiv{
\begin{align*}
\widetilde{O}\left(\frac{\A}{(1-\gamma)^3\veps^2}\right)  & \stackrel{\text{choice of }\veps}{=\joinrel=\joinrel=\joinrel=} \widetilde{O}\left(\frac{\A}{(1-\gamma)\eps^2}\right) \\
& \stackrel{\text{choice of }\gamma}{=\joinrel=\joinrel=\joinrel=} \widetilde{O}\left(\frac{\A\tmix}{\eps^3}\right).
\end{align*}
}

For our lower bound, we modify the hard DMDP instances considered in~\citep{azar2013minimax}. In these instances we add to most state-action pairs a small probability of $O(1/\tmix)$ to ``restart'' the Markov chain, leading to an AMDP with $\tmix$ mixing bound under the modified transition probabilities. In the class of hard instances, each state $i^1\in\xset^1$ transits to a different state $i^2_{(i^1,a^{1})}$ when the agent takes a different action $a^1\in\calA_{i^1}\defeq[K]$, and there is one action among these that contributes to a larger cumulative reward than all the rest. We show to find an $\eps$-optimal policy for the constructed AMDP, one needs to identify the correct actions for at least a constant fraction of the states $i^1\in\xset^1$, which each requires $\Omega(K\tmix/\eps^2)$ samples over all actions in $\calA_{i^1}$, giving the desired $\Omega(\A\tmix/\eps^2)$ sample lower bound.

\subsection{Previous Work}\label{sec:intro-review}

The study of sample complexities for finding approximately optimal policies for MDPs dates back to the proposal of generative models in 2000s~\citep{K03}.  Ever since, the area has seen vast progress in terms of understanding the hardness of solving different types of MDPs~\citep{azar2013minimax} and in designing efficient algorithms with improved sample complexities. Here we briefly survey advances in the complexity of computing approximately optimal policies for three typical types of MDPs given by a generative model  (see Table~\ref{tab:main} for a summary of these relevant prior results in each setup).\footnote{Another typical setting (outside the scope of this work) for studying these MDPs is to design efficient algorithms to minimize the \emph{regret} compared with the optimal policy~\cite{kearns2002near,ortner2007logarithmic,ortner2020regret}.}

\notarxiv{
\begin{table*}[t]
	\centering
	\renewcommand{\arraystretch}{1.4}
	\bgroup
	\caption{\textbf{Upper and lower bounds on sample complexity  to get $\eps$-optimal policy for different type of MDPs.} Here $\A$ denotes the total size of all state-action pairs, $\gamma$ is discount factor for DMDP, $H$-MDP corresponds to finite-horizon MDP with length $H$, $\tmix$ is mixing time for mixing AMDP, and $\tau$ is an ergodicity parameter showing up whenever the designed algorithm requires additional ergodic condition for MDP, i.e. there exists some distribution $\qq$ and $\tau>0$ satisfying $\sqrt{1/\tau}\qq\le \nnu^\pi\le\sqrt{\tau}\qq$, for any policy $\pi$ and its induced stationary distribution $\nnu^\pi$. 
	}\label{tab:main} 

	\everymath{\displaystyle}
	\begin{tabular}{c c c c}
		\toprule
		Type                        & 
		Method                    & Sample Complexity &
		Accuracy \\ 
		\midrule
				\multirow{9}{*}{\makecell{DMDP}}     & \emph{lower bound}~\citep{azar2013minimax,feng2019does}
	 &  $\widetilde{\Omega}(\A(1-\gamma)^{-3}\eps^{-2})$      & N/A \\ 
    & Empirical QVI~\citep{azar2013minimax}
	 &  $\Otilb{\A(1-\gamma)^{-3}\eps^{-2}}$      & $\eps\in(0, 1/\sqrt{(1-\gamma)|\calS|})$ \\ 
	&   Primal-Dual Method~\citep{W17d}  &   $\Otilb{\A\tau^4(1-\gamma)^{-4}\eps^{-2}}$ & $\eps\in(0, 1/(1-\gamma))$ \\ 
				&   Variance-reduced QVI~\citep{SWWYY18}      &  $\Otilb{\A(1-\gamma)^{-3}\eps^{-2}}$  & $\eps\in(0, 1)$ \\ 
		&   Empirical MDP Sampler~\citep{agarwal2019optimality}      &  $\Otilb{\A(1-\gamma)^{-3}\eps^{-2}}$   & $\eps \in(0, \sqrt{1/(1-\gamma)})$\\ 
		& Primal-Dual SMD~\cite{jin2020efficiently}                &  $\Otilb{\A(1-\gamma)^{-4}\eps^{-2}}$ & $\eps\in(0, 1/(1-\gamma))$   \\
		 & Preturbed Empirical MDP Sampler~\citep{li2020breaking}             &  $\Otilb{\A(1-\gamma)^{-3}\eps^{-2}}$ & $\eps\in(0, 1/(1-\gamma))$  \\
			\hline
		\multirow{2}{*}{\makecell{H-MDP}}   &   \emph{lower bound}~\citep{SWWYY18}      &  $\widetilde{\Omega}\left(\A H^3\eps^{-2}\right)$  & N/A \\ 
		 &   Variance-reduced QVI~\citep{SWWYY18}      &  $\Otilb{\A H^3\eps^{-2}}$  & $\eps\in(0, 1)$ \\ 
	\hline

	\multirow{2}{*}{\makecell{AMDP}} 
		& \emph{lower bound} (Theorem~\ref{thm:main-lb})  &  $\Omega\left(\A\tmix\eps^{-2}\right)$ &  N/A\\
		&  Primal-Dual Method~\citep{W17m}  &   $\Otilb{\tau^2\A\tmix^2\eps^{-2}}$ & $\eps\in(0, 1)$ \\
		& Primal-Dual SMD~\citep{jin2020efficiently}  &   $\Otilb{\A\tmix^2\eps^{-2}}$ & $\eps\in(0, 1)$
		  \\ 
		  & \textbf{Our method} (Theorem~\ref{thm:main-ub})                &    $\Otilb{\A\tmix\eps^{-3}}$ & $\eps\in(0, 1)$ 
		  \\ 

		\bottomrule
	\end{tabular}
	\egroup
\end{table*}
}
\arxiv{
\begin{table*}[t]
	\centering
	\renewcommand{\arraystretch}{1.4}
	\bgroup
	\caption{\textbf{Upper and lower bounds on sample complexity  to get $\eps$-optimal policy for different type of MDPs.} Here $\A$ denotes the total size of all state-action pairs, $\gamma$ is discount factor for DMDP, $H$-MDP corresponds to finite-horizon MDP with length $H$, and $\tmix$ is mixing time for mixing AMDP. Further, $\tau$ is an ergodicity parameter which arises in algorithms that require an additional ergodic condition for MDP which states that there exists some distribution $\qq$ and $\tau>0$ satisfying $\sqrt{1/\tau}\qq\le \nnu^\pi\le\sqrt{\tau}\qq$, for any policy $\pi$ and its induced stationary distribution $\nnu^\pi$. 
	}\label{tab:main} 

	\everymath{\displaystyle}
	\begin{tabular}{c c c c}
		\toprule
		Type                        & 
		Method                    & Sample Complexity &
		Accuracy \\ 
		\midrule
				\multirow{9}{*}{\makecell{DMDP}}     & \emph{lower bound}~\citep{azar2013minimax,feng2019does}
	 &  $\widetilde{\Omega}(\A(1-\gamma)^{-3}\eps^{-2})$      & N/A \\ 
    & Empirical QVI~\citep{azar2013minimax}
	 &  $\Otilb{\A(1-\gamma)^{-3}\eps^{-2}}$      & $\eps\in(0, \tfrac{1}{\sqrt{(1-\gamma)|\calS|}})$ \\ 
	&   Primal-Dual Method~\citep{W17d}  &   $\Otilb{\A\tau^4(1-\gamma)^{-4}\eps^{-2}}$ & $\eps\in(0, \tfrac{1}{1-\gamma})$ \\ 
				&   Variance-reduced QVI~\citep{SWWYY18}      &  $\Otilb{\A(1-\gamma)^{-3}\eps^{-2}}$  & $\eps\in(0, 1)$ \\ 
		&   Empirical MDP Sampler~\citep{agarwal2019optimality}      &  $\Otilb{\A(1-\gamma)^{-3}\eps^{-2}}$   & $\eps \in(0, \tfrac{1}{\sqrt{1-\gamma}})$\\ 
		& Primal-Dual SMD~\cite{jin2020efficiently}                &  $\Otilb{\A(1-\gamma)^{-4}\eps^{-2}}$ & $\eps\in(0, \tfrac{1}{1-\gamma})$   \\
		 & Preturbed Empirical MDP Sampler~\citep{li2020breaking}             &  $\Otilb{\A(1-\gamma)^{-3}\eps^{-2}}$ & $\eps\in(0, \tfrac{1}{1-\gamma})$  \\
			\hline
		\multirow{2}{*}{\makecell{H-MDP}}   &   \emph{lower bound}~\citep{SWWYY18}      &  $\widetilde{\Omega}\left(\A H^3\eps^{-2}\right)$  & N/A \\ 
		 &   Variance-reduced QVI~\citep{SWWYY18}      &  $\Otilb{\A H^3\eps^{-2}}$  & $\eps\in(0, 1)$ \\ 
	\hline

	\multirow{2}{*}{\makecell{AMDP}} 
		& \emph{lower bound} (Theorem~\ref{thm:main-lb})  &  $\Omega\left(\A\tmix\eps^{-2}\right)$ &  N/A\\
		&  Primal-Dual Method~\citep{W17m}  &   $\Otilb{\tau^2\A\tmix^2\eps^{-2}}$ & $\eps\in(0, 1)$ \\
		& Primal-Dual SMD~\citep{jin2020efficiently}  &   $\Otilb{\A\tmix^2\eps^{-2}}$ & $\eps\in(0, 1)$
		  \\ 
		  & \textbf{Our method} (Theorem~\ref{thm:main-ub})                &    $\Otilb{\A\tmix\eps^{-3}}$ & $\eps\in(0, 1)$ 
		  \\ 

		\bottomrule
	\end{tabular}
	\egroup
\end{table*}
}

\paragraph{DMDPs.}
~\citet{azar2013minimax} and~\citet{feng2019does} prove a lower bound of $\widetilde{\Omega}(\A/(1-\gamma)^{3}\eps^{2})$ for $\gamma$-discounted MDPs. On the upper bound side, \citet{azar2013minimax} also obtain a $Q$-value-iteration algorithm with  a (sub-)optimal sample complexity of either $\widetilde{O}(\A/(1-\gamma)^5\eps^2)$ or $\widetilde{O}(\A/(1-\gamma)^3\eps^2)$ for $\eps\in(0, 1/\sqrt{(1-\gamma)|\calS|})$. Later, a sequence of work ~\citep{SWWY18,SWWYY18} provide a variance-reduced (Q-)value iteration that has near-optimal sample complexity and runtime of~$\widetilde{O}(\A/(1-\gamma)^3\eps^2)$ for $\eps\in(0, 1)$. Their method incorporates variance reduction in estimating the value iteration step, and performs a fine-grained analysis of the error growth through the iterative process using a tight variance bound of Markov decision processes. Similarly, \citet{W19} shows that variance reduction can be applied to Q-learning and obtain a method with competing sample complexity guarantees for the same range of accuracy $\eps\in(0, 1)$. From a more statistical and less algorithmic perspective,  another work of ~\citet{agarwal2019optimality} shows that $\widetilde{O}(\A/(1-\gamma)^3\eps^2)$ samples suffice to build an empirical MDP such that the optimal policy of it yields a near-optimal policy for the original MDP. Their work also utilizes the fine-grained variance bound using Berstein inequality and extends the near-optimal sample complexity dependence to all $\eps\in(0, 1/\sqrt{1-\gamma})$. Recent work~\citep{li2020breaking} fully settled the sample complexity for DMDPs for all $\eps$, i.e. $\eps\in(0,1/(1-\gamma))$, with a perturbed empirical MDP construction. We leverage this result crucially to obtain our results in Section~\ref{sec:ub}.

\paragraph{Finite-horizon MDPs.}
For finite-horizon MDPs, cumulative reward is measured as the sum of the rewards obtained within first $H$ steps for a given finite horizon $H>0$. To the best of our knowledge, the only near-optimal algorithm given for finite-horizon MDP is in~\citet{SWWY18}. There, the authors show how to apply the near-optimal variance-reduced value iteration method to MDPs with finite horizon of length $H$. They formally prove an upper bound of $\widetilde{O}(\A H^3/\eps^{2})$ in sample complexity, for $\eps\in(0,1)$. Through reduction to DMDP lower bounds, they also obtain a lower bound of $\widetilde{\Omega}(\A H^3/\eps^2)$. 

\paragraph{AMDPs.}

Average-reward MDPs with bounded mixing time are another fundamental class of MDPs~\cite{kearns2002near,ortner2007logarithmic}, though less studied in terms of sample complexity. The first sample complexity bounds in the setting of a generative model is~\citet{W17m}, which applies a primal-dual method for the minimax problem related to the linear programming formulation, and proves an upper bound of $O(\tau^2\A\tmix^2/\eps^2)$ where $\tau$ denotes an upper bound on the ergodicity of all stationary distribution under arbitrary policies, i.e.\ there exists some distribution $\qq\in\Delta^\calS$ satisfying $\sqrt{1/\tau}\cdot\qq\le \nnu^\pi\le\sqrt{\tau}\cdot\qq$ for all policies $\pi$ and its induced stationary distribution $\nnu^\pi$. Recently, ~\citet{jin2020efficiently} design a similar primal-dual stochastic mirror descent and improve the bound to $O(\A\tmix^2/\eps^2)$, removing the ergodicity assumption through an improved analysis of the optimality conditions of the minimax problem. In contrast to value iteration, Q-learning, and sample-based methods for DMDPs, both known efficient methods for AMDPs use a linear programming formulation, dynamic sampling, a stronger mixing condition, and only compute randomized stationary policies. On the hardness side, there is no known lower bound for AMDPs with bounded mixing time.

\subsection{Notation}\label{sec:prelims}

We use unbold letters, e.g. $V$, to denote scalars, and bold letters, e.g. $\vv$ and $\PP$, to denote vectors and matrices. We use $\ee_i$ to denote the basis vector that is $1$ on coordinate $i$, and $0$ elsewhere. We use $\1_\calS$ to denote the all-ones vector in $\R^\calS$, and omit the subscript when it is clear from context. We use $\|\cdot\|_\infty$ to denote the $\ell_\infty$-norm of vectors and $\ell_\infty$-operator norm of matrices, e.g.
\arxiv{
\[
\|\vv\|_\infty \defeq \max_{s\in\calS}|\vv(s)|~\text{and}~\|\PP\|_\infty \defeq 
\max_{\norm{\xx}_\infty = 1} \norm{\PP \xx}_\infty =
\max_{s\in\calS}\sum_{s'\in\calS}|\PP(s,s')|\,.
\]
}
\notarxiv{
$\|\vv\|_\infty \defeq \max_{s\in\calS}|\vv(s)|$ and $\|\PP\|_\infty \defeq 
\max_{\norm{\xx}_\infty = 1} \norm{\PP \xx}_\infty =
\max_{s\in\calS}\sum_{s'\in\calS}|\PP(s,s')|$.
}
\section{Upper Bound}\label{sec:ub}

In this section, we prove the sample complexity upper bound for obtaining an $\eps$-optimal deterministic policy for mixing AMDPs. We first provide Lemma~\ref{lem-mixing-discounted-closeness} that relates the value of AMDPs and DMDPs under the same policy. Then we reduce solving AMDPs to DMDPs with the proper discount factor (Lemma~\ref{lem:redx-ub}) and use the state-of-the-art DMDP solver (restated in Lemma~\ref{lem:solver}) to obtain our result.

Throughout the section, we consider some mixing AMDP, and its corresponding DMDP with the same tuple $(\calS,\calA,\PP,\rr)$ and some discount factor $\gamma\in(0,1)$ to be specified.

\paragraph{Characterization of value vectors.}
  
We introduce the  value vector $\vv^\pi\in\R^\calS$ under a given policy $\pi$ for all states $s\in\calS$.  To distinguish between the value vectors of a DMDP and AMDP, we use $\vv^\pi_{\gamma}$ for DMDPs and $\vv^\pi$ for AMDPs respectively. For the discounted case, we let $\vv_{\gamma}^\pi(s) = V^\pi_{\qq = \ee_s,\gamma}$, i.e. the cumulative reward of the MDP with initial distribution only on state $s$.  We first give the following equations for computing value vectors $\vv^\pi$. These are known results widely used in literature~(see also \citet{puterman2014markov, W17m,jin2020efficiently}).

Given a tuple $(\calS,\calA,\PP,\rr)$, for DMDP with discount factor $\gamma$ and a policy $\pi$, we have 
\begin{equation}
\vv^\pi_\gamma = \sum_{t\ge 0}\gamma^{t}(\PP^\pi)^t\rr^\pi 
\enspace\text{ and }\enspace 
\linf{\vv^\pi_\gamma}\in\left[0,\frac{1}{1-\gamma}\right].\label{eq:DMDP-value}
\end{equation}
Similarly, for AMDP and a policy $\pi$ that induces stationary distribution $\nnu^\pi$, since the reward doesn't depend on initial distribution, we have
\begin{equation}
\vv^\pi = \langle \rr^\pi,\nnu^\pi\rangle\1
\enspace\text{ and }\enspace 
\linf{\vv^\pi}\in\left[0,1\right].\label{eq:DMDP-value}
\end{equation}

To put the two value vectors on the same $[0,1]$ scale, we introduce the following rescaled value vectors denoted by $\bar{\vv}^\pi$, one has
\begin{equation}\label{def:value-vector-rescaled}
\begin{aligned}
\bar{\vv}^\pi & = \vv^\pi = \lim_{T\rightarrow \infty}\frac{1}{T}\sum_{t=0}^T (\PP^\pi)^t \rr^\pi = V^\pi\cdot\1\\
 \text{ and }~~   \bar{\vv}_{\gamma}^\pi & =  (1-\gamma)\vv_{\gamma}^\pi = (1-\gamma)\left[\sum_{t=0}^\infty \gamma^t(\PP^\pi)^t\rr^\pi \right].
\end{aligned}
\end{equation}

We first state a helper lemma for $\PP^\pi$ following from the fact that it has mixing time bound $\tmix$, quoted from Lemma 23 of~\citet{cohen2016faster}.

\begin{lemma}[see Lemma 14 of~\citet{jin2020efficiently}, Lemma 23 of~\citet{cohen2016faster}]\label{lem:inf-bound}
For any policy $\pi$ with induced probabilistic transition matrix $\PP^\pi$ of mixing time $\tmix$ and stationary distribution $\nnu^\pi$ and any non-negative integer $k\ge \tmix$,
\begin{align*}
\linf{(\PP^\pi)^{k}-\1(\nu^\pi)^\top}  \le \left(\frac{1}{2}\right)^{\left\lfloor \frac{k}{\tmix}\right\rfloor}.
\end{align*}
\end{lemma}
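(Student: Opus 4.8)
The plan is to view $\PP^\pi$ as a row-stochastic operator and $\Pi \defeq \1(\nnu^\pi)^\top$ as the rank-one ``projection onto the stationary distribution,'' and to bound $\linf{(\PP^\pi)^k - \Pi}$ by combining (i) a translation of the $\ell_\infty$ operator norm into the total-variation quantity appearing in the definition of $\tmix$, (ii) an idempotent/telescoping algebraic structure, and (iii) submultiplicativity of $\linf{\cdot}$.

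First I would establish the base case $\linf{(\PP^\pi)^{\tmix} - \Pi}\le \tfrac12$. Writing $M = \PP^\pi$, row $s$ of $M^t - \Pi$ is $\ee_s^\top M^t - (\nnu^\pi)^\top$, so since $\linf{\cdot}$ is the maximum absolute row sum, $\linf{M^t - \Pi} = \max_{s} \lones{(M^\top)^t \ee_s - \nnu^\pi}$. The map $\qq \mapsto \lones{(M^\top)^t \qq - \nnu^\pi}$ is convex, so its maximum over the simplex $\Delta^\calS$ is attained at a vertex $\ee_s$; hence $\linf{M^t - \Pi} = \max_{\qq\in\Delta^\calS} \lones{(M^\top)^t\qq - \nnu^\pi}$, which is exactly the quantity controlled in Assumption~\ref{assum}. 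Because $\tmix$ is a maximum over policies it upper bounds the mixing time of this particular $\pi$, and the TV distance to stationarity is non-increasing in $t$ (as $M^\top$ maps $\qq - \nnu^\pi$ by an $\ell_1$-contraction fixing $\nnu^\pi$), so evaluating at $t=\tmix$ gives $\linf{M^{\tmix}-\Pi}\le \tfrac12$.

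Next I would exploit the algebraic facts $M\Pi = \Pi$, $\Pi M = \Pi$, and $\Pi^2 = \Pi$, which follow from $M\1 = \1$, $(\nnu^\pi)^\top M = (\nnu^\pi)^\top$, and $(\nnu^\pi)^\top\1 = 1$. A short induction then yields $(M^{\tmix} - \Pi)^q = M^{q\tmix} - \Pi$ for every integer $q\ge 1$, since all cross terms collapse to $\Pi$ (e.g.\ $(M^{\tmix}-\Pi)^2 = M^{2\tmix} - \Pi - \Pi + \Pi$). Setting $N \defeq M^{\tmix} - \Pi$ and using submultiplicativity of $\linf{\cdot}$ gives $\linf{M^{q\tmix} - \Pi} = \linf{N^q} \le \linf{N}^q \le (\tfrac12)^q$.

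Finally, to handle a general $k\ge\tmix$ I would write $k = q\tmix + r$ with $q = \floor{k/\tmix}\ge 1$ and $0\le r<\tmix$. Using $\Pi M^r = \Pi$, factor $M^k - \Pi = (M^{q\tmix} - \Pi)M^r$, so that submultiplicativity together with $\linf{M^r} = 1$ (as $M^r$ is stochastic) yields $\linf{M^k - \Pi}\le \linf{M^{q\tmix}-\Pi}\le (\tfrac12)^{\floor{k/\tmix}}$, as claimed. I expect the main subtlety to be the base case: carefully justifying that the $\ell_\infty$ operator norm equals the simplex-maximized TV quantity (via the vertex/convexity reduction) and that passing from the per-policy mixing time to the global bound $\tmix$ is legitimate (via monotonicity in $t$). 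The remaining power and remainder steps are routine once the idempotent identities $M\Pi=\Pi M = \Pi^2 = \Pi$ are in place.
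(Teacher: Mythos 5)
Your proof is correct, but there is nothing in the paper to compare it against: the paper never proves Lemma~\ref{lem:inf-bound}, it imports it by citation from Lemma 14 of \citet{jin2020efficiently} and Lemma 23 of \citet{cohen2016faster}. Your argument is essentially the standard derivation behind those citations, and each step checks out: with $M = \PP^\pi$ and $\Pi = \1(\nnu^\pi)^\top$, the identities $M\Pi = \Pi M = \Pi^2 = \Pi$ give $(M^{\tmix}-\Pi)^q = M^{q\tmix}-\Pi$; submultiplicativity of $\linf{\cdot}$ turns the base case into $\linf{M^{q\tmix}-\Pi}\le (1/2)^q$; and the factorization $M^k-\Pi = (M^{q\tmix}-\Pi)M^r$ with $\linf{M^r}=1$ handles the remainder, yielding exactly $(1/2)^{\lfloor k/\tmix\rfloor}$. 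Two remarks on the base case. First, you only need the easy inequality $\linf{M^{\tmix}-\Pi} = \max_{s}\lones{((\PP^\pi)^\top)^{\tmix}\ee_s-\nnu^\pi} \le \max_{\qq\in\Delta^\calS}\lones{((\PP^\pi)^\top)^{\tmix}\qq-\nnu^\pi}$, which is immediate because each $\ee_s$ lies in $\Delta^\calS$; your convexity/vertex argument establishes the reverse direction (equality), which the proof never actually uses. Second, your monotonicity observation is genuinely needed and correctly justified: Assumption~\ref{assum} only guarantees the total-variation bound at the per-policy mixing time $t_\pi \le \tmix$, and the fact that $(\PP^\pi)^\top$ is an $\ell_1$-contraction fixing $\nnu^\pi$ is what propagates that bound from step $t_\pi$ to step $\tmix$.
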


Now we provide a lemma that bounds the $\ell_\infty$-difference of the two rescaled value vectors $\bar{\vv}^\pi$ and $\bar{\vv}_{\gamma}^\pi$ under the given policy $\pi$ and discount factor $\gamma$.

\begin{lemma}\label{lem-mixing-discounted-closeness}
Given a same MDP tuple $(\calS,\calA,\PP,\rr)$, a policy $\pi$ and some discount factor $\gamma$, rescaled value vectors $\bar{\vv}^\pi$ for AMDP and $\bar{\vv}^\pi_{\gamma}$ for DMDP as defined in~\eqref{def:value-vector-rescaled} satisfy
\[
\linf{\bar{\vv}^\pi-\bar{\vv}^\pi_{\gamma}}\le 3(1-\gamma)
\tmix.
\]
\end{lemma}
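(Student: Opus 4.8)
The key idea is to rewrite both rescaled value vectors as discounted sums of the \emph{same} form so they can be compared term by term. Since $(1-\gamma)\sum_{t\ge 0}\gamma^t = 1$ and, by~\eqref{def:value-vector-rescaled}, $\bar{\vv}^\pi = V^\pi\1 = \1(\nnu^\pi)^\top\rr^\pi$ is independent of $t$, I would first write
\[
\bar{\vv}^\pi = (1-\gamma)\sum_{t\ge 0}\gamma^t \,\1(\nnu^\pi)^\top\rr^\pi,
\qquad
\bar{\vv}^\pi_{\gamma} = (1-\gamma)\sum_{t\ge 0}\gamma^t (\PP^\pi)^t\rr^\pi .
\]
Subtracting these gives the clean identity
\[
\bar{\vv}^\pi-\bar{\vv}^\pi_{\gamma}
= (1-\gamma)\sum_{t\ge 0}\gamma^t\big[\1(\nnu^\pi)^\top-(\PP^\pi)^t\big]\rr^\pi,
\]
so the whole difference is controlled by how fast $(\PP^\pi)^t$ approaches its stationary projection $\1(\nnu^\pi)^\top$, which is exactly what Lemma~\ref{lem:inf-bound} quantifies.

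Next I would apply the triangle inequality in $\ell_\infty$ and use $\linf{\rr^\pi}\le 1$ to pass to the $\ell_\infty$-operator norm, obtaining
\[
\linf{\bar{\vv}^\pi-\bar{\vv}^\pi_{\gamma}}
\le (1-\gamma)\sum_{t\ge 0}\gamma^t\,\linf{\1(\nnu^\pi)^\top-(\PP^\pi)^t}.
\]
I would then split the sum at $t=\tmix$. For the small-$t$ terms $0\le t<\tmix$, the mixing bound is not yet effective, so I use the trivial estimate: both $(\PP^\pi)^t$ and $\1(\nnu^\pi)^\top$ are row-stochastic (row sums equal to $1$), hence each has operator norm $1$ and their difference has norm at most $2$. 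For the tail $t\ge\tmix$, Lemma~\ref{lem:inf-bound} gives $\linf{(\PP^\pi)^t-\1(\nnu^\pi)^\top}\le (1/2)^{\lfloor t/\tmix\rfloor}$.

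Finally I would sum each piece. The head contributes $(1-\gamma)\sum_{t=0}^{\tmix-1}\gamma^t\cdot 2 = 2(1-\gamma^{\tmix})\le 2(1-\gamma)\tmix$, using $1-\gamma^{\tmix}=(1-\gamma)\sum_{t=0}^{\tmix-1}\gamma^t\le (1-\gamma)\tmix$. For the tail, bounding $\gamma^t\le 1$ and grouping the indices into blocks of length $\tmix$ (on which $\lfloor t/\tmix\rfloor$ is constant) yields $(1-\gamma)\sum_{t\ge\tmix}(1/2)^{\lfloor t/\tmix\rfloor} = (1-\gamma)\tmix\sum_{k\ge 1}(1/2)^k=(1-\gamma)\tmix$. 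Adding the two gives exactly $3(1-\gamma)\tmix$. There is no deep obstacle here; the only point requiring care is the treatment of the pre-mixing terms $t<\tmix$, where the geometric decay of Lemma~\ref{lem:inf-bound} does not apply and one must fall back on the crude bound $2$ together with the elementary inequality $1-\gamma^{\tmix}\le(1-\gamma)\tmix$ to keep the correct $(1-\gamma)\tmix$ scaling.
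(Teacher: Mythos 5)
Your proposal is correct and follows essentially the same route as the paper's proof: the identical decomposition $\bar{\vv}^\pi-\bar{\vv}^\pi_{\gamma} = (1-\gamma)\sum_{t\ge 0}\gamma^t\bigl[\1(\nnu^\pi)^\top-(\PP^\pi)^t\bigr]\rr^\pi$, the same appeal to Lemma~\ref{lem:inf-bound}, and the same split at $t=\tmix$ with the crude bound of $2$ on the head and the geometric decay on the tail. In fact you carry out the final summation more explicitly than the paper does (it simply asserts the $3(1-\gamma)\tmix$ bound after the split), so there is nothing to correct.
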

\begin{proof}
Note that
\arxiv{
\begin{align*}
\linf{\bar{\vv}^\pi-\bar{\vv}^\pi_{\gamma}}  
	&= \linfbigg{(1-\gamma)\sum_{t\ge 0}\gamma^t\langle\rr^\pi,\nnu^\pi\rangle\1-(1-\gamma)\sum_{t\ge 0}\gamma^t(\PP^\pi)^t\rr^\pi}\\
	&= (1-\gamma)\linfbigg{\sum_{t=0}^\infty \gamma^t\left[(\PP^\pi)^t-\1(\nnu^\pi)^\top\right]\rr^\pi} \\
	&\leq  (1-\gamma) \sum_{t=0}^\infty \gamma^t \linf{(\PP^\pi)^t-\1(\nnu^\pi)^\top} \cdot \linf{\rr^\pi} ~.
\end{align*}}
\notarxiv{
\begin{align*}
& \linf{\bar{\vv}^\pi-\bar{\vv}^\pi_{\gamma}} \\ 
	= & \linfbigg{(1-\gamma)\sum_{t\ge 0}\gamma^t\langle\rr^\pi,\nnu^\pi\rangle\1-(1-\gamma)\sum_{t\ge 0}\gamma^t(\PP^\pi)^t\rr^\pi}\\
	= & (1-\gamma)\linfbigg{\sum_{t=0}^\infty \gamma^t\left[(\PP^\pi)^t-\1(\nnu^\pi)^\top\right]\rr^\pi} \\
	\leq & (1-\gamma) \sum_{t=0}^\infty \gamma^t \linf{(\PP^\pi)^t-\1(\nnu^\pi)^\top} \cdot \linf{\rr^\pi} ~.
\end{align*}
}
Now, $\linf{\rr^\pi} \leq 1$ by assumption and for all $t \geq 0$ we have
\[
\linf{(\PP^\pi)^t-\1(\nnu^\pi)^\top} 
\leq \linf{(\PP^\pi)^t} + \linf{\1(\nnu^\pi)^\top}
= 2 ~. 
\]
Further for all $t \geq \tmix$ we have $\linf{(\PP^\pi)^t-\1(\nnu^\pi)^\top} \leq 2^{- \lfloor k/\tmix \rfloor}$ by  Lemma~\ref{lem:inf-bound}. Combining yields the desired bound of
\arxiv{
\begin{align*}
\linf{\bar{\vv}^\pi-\bar{\vv}^\pi_{\gamma}}  
&\leq
(1-\gamma) \sum_{t=0}^{\tmix - 1} \gamma^t \linf{(\PP^\pi)^t-\1(\nnu^\pi)^\top} 
+ (1-\gamma) \sum_{t \geq  \tmix} \gamma^t \linf{(\PP^\pi)^t-\1(\nnu^\pi)^\top}  \\
&\leq (1-\gamma) \sum_{t=0}^{ \tmix - 1} 2 \gamma^t 
+ (1-\gamma) \sum_{t \geq  \tmix} \frac{1}{2^{\lfloor k/\tmix\rfloor}}
\leq 3(1 - \gamma)\tmix\,.
\end{align*}}
\notarxiv{
\begin{align*}
& \linf{\bar{\vv}^\pi-\bar{\vv}^\pi_{\gamma}} \\ 
\leq & 
(1-\gamma) \sum_{t=0}^{\tmix - 1} \gamma^t \linf{(\PP^\pi)^t-\1(\nnu^\pi)^\top} \\
& + (1-\gamma) \sum_{t \geq  \tmix} \gamma^t \linf{(\PP^\pi)^t-\1(\nnu^\pi)^\top}  \\
\leq & (1-\gamma) \sum_{t=0}^{ \tmix - 1} 2 \gamma^t 
+ (1-\gamma) \sum_{t \geq  \tmix} \frac{1}{2^{\lfloor k/\tmix\rfloor}}\\
\leq & 3(1 - \gamma)\tmix\,.
\end{align*}}
\end{proof}

This lemma shows under the same policy, the values of AMDP and its corresponding DMDP are close up to $\eps$ when choosing the discount factor $\gamma = 1-\Theta(\eps/\tmix)$. This  allows us to formally reduce solving AMDPs to solving DMDPs with large enough discount factors in Lemma~\ref{lem:redx-ub}.

\begin{lemma}\label{lem:redx-ub}
Given an AMDP with mixing time bounded by $\tmix$, accuracy parameter $\eps\in(0,1)$, and an $\tfrac{\eps}{3(1-\gamma)}$-optimal policy $\pi$ for the corresponding DMDP with $\gamma = 1-\tfrac{\eps}{9\tmix}$, $\pi$ is also a $\eps$-optimal policy for the original AMDP.
\end{lemma}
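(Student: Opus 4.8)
The plan is to bound the AMDP value $V^\pi$ of the given policy from below by the AMDP value $V^{\pi^\star}$ of an AMDP-optimal policy $\pi^\star$, by passing through their discounted counterparts and invoking Lemma~\ref{lem-mixing-discounted-closeness} at both ends of the chain. I write $\pi^\star$ for an AMDP-optimal policy and $\pi^\star_\gamma$ for an optimal policy of the corresponding $\gamma$-DMDP. I recall that $\bar{\vv}^\pi = V^\pi\1$ while $\bar{\vv}^\pi_\gamma = (1-\gamma)\vv^\pi_\gamma$, so the hypothesis that $\pi$ is $\tfrac{\eps}{3(1-\gamma)}$-optimal for the DMDP — namely $\vv^\pi_\gamma(s)\ge \vv^{\pi^\star_\gamma}_\gamma(s)-\tfrac{\eps}{3(1-\gamma)}$ for all $s$ — becomes, after multiplying through by $(1-\gamma)$, the coordinatewise bound $\bar{\vv}^\pi_\gamma(s)\ge \bar{\vv}^{\pi^\star_\gamma}_\gamma(s)-\tfrac{\eps}{3}$ for every state $s$.

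First I would fix an arbitrary state $s$ and lower-bound $V^\pi = \bar{\vv}^\pi(s)$ by $\bar{\vv}^\pi_\gamma(s)-3(1-\gamma)\tmix$ using Lemma~\ref{lem-mixing-discounted-closeness} applied to $\pi$. Next I would apply the rescaled optimality hypothesis to replace $\bar{\vv}^\pi_\gamma(s)$ by $\bar{\vv}^{\pi^\star_\gamma}_\gamma(s)-\tfrac{\eps}{3}$. The structural fact I would invoke here is that an optimal DMDP policy dominates every other policy's discounted value at every state simultaneously — the standard existence of a uniformly optimal stationary policy for DMDPs — so in particular $\bar{\vv}^{\pi^\star_\gamma}_\gamma(s)\ge \bar{\vv}^{\pi^\star}_\gamma(s)$, letting me swap the DMDP-optimal policy for the AMDP-optimal $\pi^\star$ without loss. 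Finally I would apply Lemma~\ref{lem-mixing-discounted-closeness} a second time, now to $\pi^\star$, to pass from $\bar{\vv}^{\pi^\star}_\gamma(s)$ back up to $\bar{\vv}^{\pi^\star}(s)-3(1-\gamma)\tmix = V^{\pi^\star}-3(1-\gamma)\tmix$.

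Combining the three steps gives $V^\pi \ge V^{\pi^\star}-\tfrac{\eps}{3}-6(1-\gamma)\tmix$, and substituting the prescribed $\gamma = 1-\tfrac{\eps}{9\tmix}$, so that $6(1-\gamma)\tmix=\tfrac{2\eps}{3}$, collapses this to $V^\pi\ge V^{\pi^\star}-\eps$. Since $s$ was arbitrary and $\bar{\vv}^\pi,\bar{\vv}^{\pi^\star}$ are constant vectors, this is exactly the claimed $\eps$-optimality of $\pi$ for the AMDP at every initial distribution $\qq$.

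I do not expect a genuine obstacle, as the argument is a short triangle-type chain. The one point requiring care is the middle step: the DMDP-optimality hypothesis compares $\pi$ against the DMDP-optimal $\pi^\star_\gamma$, and one must then separately argue that $\pi^\star_\gamma$'s discounted value dominates that of $\pi^\star$, rather than comparing $\pi$ directly against $\pi^\star$ in the DMDP. Making the constants line up — splitting the total slack $\eps$ as $\tfrac{\eps}{3}$ from DMDP suboptimality plus $2\times 3(1-\gamma)\tmix=\tfrac{2\eps}{3}$ from the two applications of Lemma~\ref{lem-mixing-discounted-closeness} — is precisely what dictates the target accuracy $\tfrac{\eps}{3(1-\gamma)}$ and the discount factor $\gamma = 1-\tfrac{\eps}{9\tmix}$.
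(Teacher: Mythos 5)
Your proposal is correct and follows exactly the paper's own argument: apply Lemma~\ref{lem-mixing-discounted-closeness} to $\pi$, invoke the rescaled DMDP-optimality hypothesis, use the entrywise domination of the DMDP-optimal policy over the AMDP-optimal policy, and apply Lemma~\ref{lem-mixing-discounted-closeness} once more to the AMDP-optimal policy. The constant bookkeeping ($\eps/3 + 2\cdot 3(1-\gamma)\tmix = \eps$ under $\gamma = 1 - \tfrac{\eps}{9\tmix}$) also matches the paper's accounting.
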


\begin{proof}
Consider a DMDP with the same transition matrix and discount factor $\gamma  = 1-\frac{\eps}{9\tmix}$, 	we have $\linf{\bar{\vv}^\pi-\bar{\vv}^\pi_{\gamma}}\le \eps/3$ by Lemma~\ref{lem-mixing-discounted-closeness}.

Now let $\pi_d$ and $\pi_a$ denote optimal policies for the DMDP and AMDP respectively. By definition of $\pi$ one has
\arxiv{
\begin{equation}\label{def:pi-d-pi}
\linf{\vv^\pi_{\gamma}-\vv^{\pi_d}_{\gamma}}\le \frac{\eps}{3(1-\gamma)} 
\enspace\text{ or equivalently, }\enspace
\linf{\bar{\vv}^\pi_{\gamma}-\bar{\vv}^{\pi_d}_{\gamma}}\le \eps/3.
\end{equation}}
\notarxiv{
\begin{equation}\label{def:pi-d-pi}
\linf{\vv^\pi_{\gamma}-\vv^{\pi_d}_{\gamma}}\le \frac{\eps}{3(1-\gamma)} 
\enspace\Longleftrightarrow\enspace
\linf{\bar{\vv}^\pi_{\gamma}-\bar{\vv}^{\pi_d}_{\gamma}}\le \eps/3.
\end{equation}
}
Consequently, one has that entrywise,
\arxiv{
\begin{align*}
\bar{\vv}^\pi+\tfrac{\eps}{3}\1 & \stackrel{(i)}{\ge} \bar{\vv}^\pi_{\gamma}\stackrel{(ii)}{\ge} \bar{\vv}^{\pi_d}_{\gamma}-\tfrac{\eps}{3}\1\stackrel{(iii)}{\ge} \bar{\vv}^{\pi_a}_{\gamma}-\tfrac{\eps}{3}\1 \stackrel{(i)}{\ge} \left(\bar{\vv}^{\pi_a}-\tfrac{\eps}{3}\1\right)-\tfrac{\eps}{3}\1,
\end{align*}}
\notarxiv{
\begin{align*}
\bar{\vv}^\pi+\tfrac{\eps}{3}\1 & \stackrel{(i)}{\ge} \bar{\vv}^\pi_{\gamma}\stackrel{(ii)}{\ge} \bar{\vv}^{\pi_d}_{\gamma}-\tfrac{\eps}{3}\1\stackrel{(iii)}{\ge} \bar{\vv}^{\pi_a}_{\gamma}-\tfrac{\eps}{3}\1 \\
& \stackrel{(i)}{\ge} \left(\bar{\vv}^{\pi_a}-\tfrac{\eps}{3}\1\right)-\tfrac{\eps}{3}\1,
\end{align*}
}
where we use $(i)$ Lemma~\ref{lem-mixing-discounted-closeness} together with the choice of $\gamma = 1-\frac{\eps}{9\tmix}$, $(ii)$ equation~\eqref{def:pi-d-pi}, and $(iii)$ the optimality of $\pi_d$ for DMDP by definition. 

Altogether we conclude that $\bar{\vv}^\pi\ge \bar{\vv}^{\pi_a}-\eps\1$ and therefore $V^\pi\ge V^{\pi^\star}-\eps$,
i.e. $\pi$ is a $\eps$-optimal policy for the given AMDP.
\end{proof}

With the reduction, we can apply recent DMDP solvers to obtain an AMDP solver with the desired sample complexity. In order to solve the corresponding $\gamma$-discounted MDP to a desired accuracy, we use the following recent efficient DMDP solver~\cite{li2020breaking}.

\begin{lemma}[Corollary of Theorem 1 of~\citet{li2020breaking}]\label{lem:solver}
There is an algorithm that, given a $\gamma$-discounted MDP, desired accuracy $\veps\le 1/(1-\gamma)$, failure probability $0<\delta\ll 1$, outputs an $\veps$-optimal policy with probability $1-\delta$ with a number of oblivious samples bounded by
\[
\widetilde{O}\left(\frac{\A}{(1-\gamma)^3\veps^2}\log\frac{\A}{(1-\gamma)\veps\delta}\right)\,.
\]
\end{lemma}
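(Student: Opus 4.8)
The plan is to obtain this lemma directly as a specialization of Theorem~1 of~\citet{li2020breaking}, so the argument is essentially a restatement accompanied by bookkeeping of notation and of the hidden logarithmic factors rather than a fresh proof. First I would recall their construction: one draws a fixed number $N$ of independent samples from $\pp_{s,a_s}$ for every state-action pair $(s,a_s)$, forms a suitably perturbed empirical transition matrix $\widehat{\PP}$, and returns an optimal policy of the resulting empirical DMDP. Their theorem guarantees that this policy is $\veps$-optimal for the true $\gamma$-discounted MDP with probability $1-\delta$ as long as $N = \widetilde{\Omega}\bigl(\tfrac{1}{(1-\gamma)^3\veps^2}\bigr)$, and crucially this holds for the \emph{entire} admissible range $\veps\in(0,1/(1-\gamma)]$, which is exactly the regime assumed in the lemma.

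Second I would handle the translation of notation. Li et al.\ state their bound for MDPs with a uniform action set of size $A$ at each state, writing the total sample count as $|\calS|A\cdot N$; as discussed in the footnote in the introduction, this extends verbatim to the non-uniform setting by replacing $|\calS|A$ with $\A=\sum_{s\in\calS}|\calA_s|$, since the per-pair estimation analysis and the accompanying union bound are unaffected by how actions are distributed across states. Multiplying $N$ by $\A$ then yields the claimed leading term $\widetilde{O}\bigl(\A/((1-\gamma)^3\veps^2)\bigr)$. To make the $\widetilde{O}$ explicit I would track the logarithmic overhead of their high-probability analysis, which arises from a union bound over all $\A$ pairs together with the target failure probability $\delta$, the effective horizon $1/(1-\gamma)$, and the accuracy $\veps$; collecting these gives the stated factor $\log\bigl(\A/((1-\gamma)\veps\delta)\bigr)$, which dominates the remaining polylogarithmic terms.

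Finally I would emphasize the one property we genuinely rely on downstream: because $N$ is fixed in advance and the same number of samples is collected at every state-action pair before any optimization is performed, the sampling scheme is \emph{oblivious} (non-adaptive). This is precisely what allows the AMDP algorithm obtained through the reduction in Lemma~\ref{lem:redx-ub} to inherit an oblivious sample guarantee. The only real point to verify, rather than a true obstacle, is that we invoke the result in the range where it is valid for all $\veps$, i.e.\ the ``barrier-breaking'' guarantee covering $\veps\in(0,1/(1-\gamma)]$ rather than a restricted-range predecessor such as~\citet{azar2013minimax}; once this is confirmed, the remaining steps are purely notational.
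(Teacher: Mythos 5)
Your proposal is correct and matches the paper's own treatment: the paper gives no standalone proof of Lemma~\ref{lem:solver}, instead stating it as an immediate corollary of Theorem~1 of \citet{li2020breaking} together with a one-line remark that the result extends to non-uniform action spaces (the paper phrases this as ``expanding the space,'' while you argue the per-pair analysis and union bound carry over directly—an equivalent bookkeeping step). Your additional emphasis on the full range $\veps\in(0,1/(1-\gamma)]$ and on the obliviousness of the fixed-per-pair sampling scheme is exactly the content of the paper's surrounding remarks, so nothing is missing.
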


We remark that what we state is an immediate corollary of Theorem 1 in~\citet{li2020breaking} which works for non-uniform action space per state as well by expanding the space. Now we can apply this solver to find an $\eps/(1-\gamma)$-optimal policy for $\gamma$-discounted MDP to   obtain our main result.

\ub*

\begin{proof}[Proof of Theorem~\ref{thm:main-ub}]

By Lemma~\ref{lem:redx-ub}, it suffices to solve the corresponding DMDP to $\eps/3(1-\gamma)$ accuracy with $\gamma =1-\eps/9\tmix$. Using the solver in Lemma~\ref{lem:solver} with $\veps = \eps/3(1-\gamma)$, it has sample complexity bounded by the following as stated.
\begin{align*}
& \O{\frac{\A}{(1-\gamma)^3(\eps/3(1-\gamma))^2}\log \frac{\A}{(1-\gamma)\frac{\eps}{3(1-\gamma)}\delta}} \\
= & \O{\frac{\A}{(1-\gamma)\eps^2}\log \frac{\A}{\eps\delta}} = \O{\frac{\A\tmix}{\eps^3}\log\frac{\A}{\eps\delta} },
\end{align*}
where we use the choice of $\gamma$ for the last equality. This proves the correctness of the method and yields the sample complexity bound as claimed.

\end{proof}

\notarxiv{
\begin{figure*}[ht]
\vskip 0.2in
\centering
\begin{center}
\centerline{\includegraphics[width=2\columnwidth]{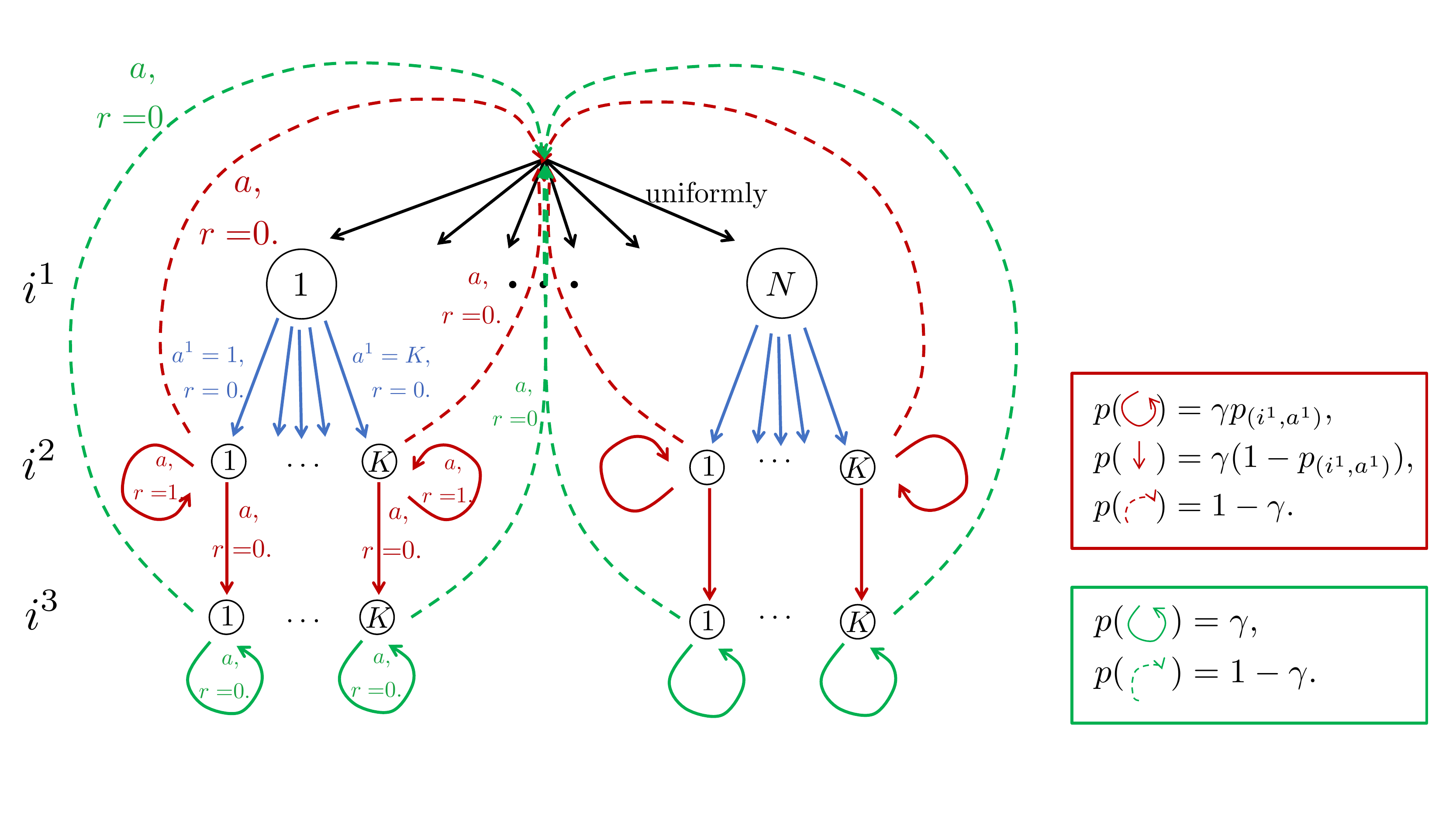}}
\caption{\textbf{AMDP lower bound hard instance illustration.} $N$ states in $\xset^1$ (corresponding to first level), $K$ action per state $i^1\in\xset^1$, $\A = O(NK)$ total state-action pairs. $\gamma\in(0,1)$, $p_{i^1,a^1}\in[0,1]$ for all $i^1\in\xset^1$, $a^1\in [K]$ are tunable parameters.}\label{fig:lb}
\end{center}
\vskip -0.2in
\end{figure*}
}

\begin{remark}[Nearly-tight $\eps$ regime.]
That \citet{li2020breaking} achieves near-optimal sample complexity for solving DMDPs for all $\veps\in(0,1/(1-\gamma))$, as opposed to the more restricted range of $\veps \in (0, 1)$ in~\citet{SWWYY18} and $\veps \in (0, 1/\sqrt{1-\gamma})$ in~\citet{agarwal2019optimality}, is key for Theorem~\ref{thm:main-ub}. This is due to the fact that we need to solve the corresponding DMDP to an accuracy of $\veps = \eps/(1-\gamma)$, which is on the order of $1/(1-\gamma)$ for constant accuracy parameter $\eps\in(0,1)$. The $\eps$ regime where we can apply prior results, i.e.  $\veps\le 1/\sqrt{1-\gamma}$, corresponds in AMDPs in the regime where our method's sample complexity is no better than that of $\widetilde{O}(\A\tmix^2/\eps^2)$ achieved by prior work~\cite{jin2020efficiently}.
\end{remark}

\section{Lower Bound}

In this section, we show a lower bound of $\Omega(\A\tmix/\eps^2)$ oblivious samples for finding an $\eps$-optimal policy for an AMDP with mixing time $\tmix$. This closes the gap (up to logarithmic and poly-$\eps$ terms) for finding an approximately-optimal policy for an AMDP given oblivious samples 
and shows that the method we propose in Theorem~\ref{thm:main-ub} with  sample complexity $\widetilde{O}(\A\tmix/\eps^{3})$ is near optimal for constant $\epsilon$. We defer some proofs in this section to Appendix~\ref{app:lb}.

To obtain this result, we provide a family of AMDP instances that we prove are difficult to solve. Our construction is similar to that given in \citet{azar2013minimax, feng2019does} for lower bounding the sample complexity of DMDPs. Formally, we consider the state space to be $\calS = \xset^1\cup\xset^2\cup\xset^3$, denoting three disjoint subsets of states on different levels (see Figure~\ref{fig:lb}). We denote the action space as $\calA_s = [K]$, for all $s = i^1\in\xset^1$, and $\calA_s = \{\text{single fixed action}\}$, for all $s\in\xset^2\cup\xset^3$.

Let  $\xset^1$ have $N$ independent states, each with $K$ independent actions. We assume for state $i^1\in\xset_1$, when taking action $a^{1}$, an agent gets to some state at second level, denoted as $i^2_{(i^1,a^1)}\in\xset^2$. At state $i^2_{(i^1,a^1)}\in\xset^2$ the agent can only take one single action after which with probability $1-\gamma$ it goes uniformly random to a state at the first level in $\xset^1$, with probability $p_{(i^1,a^1)}\gamma$ it goes back to its own state, and with probability $(1-p_{(i^1,a^1)})\gamma$ it gets to some state on the third level denoted as $i^3_{(i^1,a^1)}\in\xset^3$. At $i^3_{(i^1,a^1)}\in\xset^3$, the agent can take a single action after which with probability $1-\gamma$ it goes uniformly randomly to a state at first level in $\xset^1$ while with probability $\gamma$ it stays at the original state $i^3_{(i^1,a^1)}\in\xset^3$. A reward $1$ is generated when the agent transfers from a state in $\xset^2$ to itself, and all other transmissions generate $0$ reward. See Figure~\ref{fig:lb} for an illustration.

\arxiv{
\begin{figure*}[ht]
\vskip 0.2in
\centering
\begin{center}
\centerline{\includegraphics[width=\columnwidth]{lb-v2.pdf}}
\caption{\textbf{AMDP lower bound hard instance illustration.} $N$ states in $\xset^1$ (corresponding to first level), $K$ action per state $i^1\in\xset^1$, $\A = O(NK)$ total state-action pairs. $\gamma\in(0,1)$, $p_{i^1,a^1}\in[0,1]$ for all $i^1\in\xset^1$, $a^1\in [K]$ are tunable parameters.}\label{fig:lb}
\end{center}
\vskip -0.2in
\end{figure*}
}

We construct the instances such that for each state-action pair $(i^1,a^1)$, a chain of length-$2$ composed of states $i^2_{(i^1,a^1)}$, $i^3_{(i^1,a^1)}$ follows. The probability $(1-\gamma)$ to go back uniformly to a state $i^1\in\xset^1$ from each chain allows the entire Markov chain to ``restart'' from $i^1$ uniformly, and ensures a $O(1/(1-\gamma))$ mixing time bound, as we show in Section~\ref{sec:lb-mix}. When in a single chain, only the transition probability $p_{(i^1,a^1)}$ of transiting from $i^2_{(i^1,a^1)}$ to itself effects the average-reward. 

To create our family of hard AMDP intances, we consider all instances such that for each $i^1\in\hat{\xset}^1$, one of the following two cases occurs: \begin{itemize}
 \item 	Case $(i)$: there is one action $k\in[K]$ that leads to transition probability $\gamma p'$, and all other actions with probability $\gamma p$; in this case the optimal action is $k$.
 \item Case $(ii)$: there is one action $a^1_\star$ that leads to transition probability $\gamma p^\star$, one action  $k\in[K]$ leading to probability $\gamma p'$, and all others leading to probability $p$; in this case the optimal action is $a^1_\star$.
 \end{itemize}
In Section~\ref{sec:lb-proof} we argue one needs to find the best action for at least a constant fraction of the states $i^1\in\xset^1$ to obtain an $O(\eps)$-optimal policy and we show this requires at least $\Omega(NK\tmix\eps^{-2})$ oblivious samples for properly chosen $p$, $p'$, and $p^\star$.

\subsection{Stationary Distribution and Mixing Time}\label{sec:lb-mix}

First, we characterize the stationary distribution under a policy $\pi$; this is useful for bounding suboptimality for policies in Section~\ref{sec:lb-proof}. 

\begin{lemma}[Stationary Distribution Characterization]~\label{lem:lb-stationary}
	Consider a policy that chooses always a fixed $a^1$ for each state in $\xset^1$. The stationary distribution $\nnu$ is in the following form: 
\begin{align*}
\nnu\left(i^1\right) & = \frac{1}{N}\cdot\frac{1-\gamma}{2-\gamma}, \forall i^1\in\xset^1;\\
\nnu\left(i^2_{(i^1,a^1)}\right) & = \frac{1}{N}\cdot\frac{1-\gamma}{(1-\gamma p_{(i^1,a^1)})(2-\gamma)},\text{ if }\pi_{i^1}(a^1) = 1,\\
\nnu\left(i^2_{(i^1,a^1)}\right) & = 0, \quad \forall a^1\in \calA,\pi_{i^1}(a^1) = 0;	\\
\nnu\left(i^3_{(i^1,a^1)}\right) & = \frac{1}{N}\cdot\frac{\gamma(1-p_{(i^1,a^1)})}{(1-\gamma p_{(i^1,a^1)})(2-\gamma)}, \text{ if }\pi_{i^1}(a^1) = 1,\\
\nnu\left(i^3_{(i^1,a^1)}\right) & = 0, \quad \forall a^1\in \calA,\pi_{i^1}(a^1) = 0.
\end{align*}

\end{lemma}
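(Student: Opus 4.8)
The plan is to solve the global balance equations $\nnu^\top \PP^\pi = \nnu^\top$ together with the normalization $\sum_s \nnu(s) = 1$, exploiting the acyclic-plus-restart structure of the chain. First I would record which states can carry mass: under a policy fixing action $a^1$ at each $i^1 \in \xset^1$, the only inflow into a level-two state $i^2_{(i^1,a^1)}$ is the deterministic transition from $i^1$ when it takes exactly that action, and the only inflow into a level-three state $i^3_{(i^1,a^1)}$ is from the corresponding level-two state. Hence whenever $\pi_{i^1}(a^1)=0$ the states $i^2_{(i^1,a^1)}, i^3_{(i^1,a^1)}$ receive no probability mass and must satisfy $\nnu = 0$, which settles the two zero cases. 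It then remains to determine $\nnu$ on $\xset^1$ and on the ``active'' chains (those with $\pi_{i^1}(a^1)=1$).

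Next I would write, for an active chain, $x = \nnu(i^1)$, $y = \nnu(i^2_{(i^1,a^1)})$, $z = \nnu(i^3_{(i^1,a^1)})$, abbreviating $p := p_{(i^1,a^1)}$. The balance equation at a level-one state is $x = \tfrac{1-\gamma}{N}\sum(y+z)$, its only inflow being the uniform $(1-\gamma)$ restart contributing mass $\tfrac{1-\gamma}{N}$ from every active chain; since the right-hand side does not depend on which level-one state we chose, all $\nnu(i^1)$ are automatically equal, regardless of the values $p_{(i^1,a^1)}$. The balance equation at an active level-two state is $y = x + \gamma p\, y$, since it receives the full mass of its parent plus its own $\gamma p$ self-loop, giving $y = x/(1-\gamma p)$. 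The balance equation at an active level-three state is $z = \gamma(1-p)\,y + \gamma z$, giving $z = \tfrac{\gamma(1-p)}{1-\gamma}\,y = \tfrac{\gamma(1-p)}{(1-\gamma)(1-\gamma p)}\,x$.

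To pin down $x$ I would invoke normalization. Writing $M := \sum(y+z)$ for the total mass on levels two and three, the level-one relation gives total level-one mass $Nx = (1-\gamma)M$, and $Nx + M = 1$ then forces $M(2-\gamma)=1$, so $x = \tfrac{1-\gamma}{N(2-\gamma)}$. Substituting this back into the expressions for $y$ and $z$ reproduces exactly the claimed formulas. Finally, because the chain restricted to $\xset^1$ together with the active chains is irreducible (every active chain restarts uniformly onto all of $\xset^1$, from which every active chain is again reachable), its stationary distribution is unique, so the probability vector constructed this way is genuinely the stationary distribution rather than merely a stationary vector.

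I expect no single step to be a real obstacle; the only place requiring care is the bookkeeping of inflows — in particular that the $(1-\gamma)$ restart distributes mass as $\tfrac{1-\gamma}{N}$ onto each level-one state (making all level-one masses equal even though the $p_{(i^1,a^1)}$ differ across chains), and that the level-two and level-three self-loops correspond to the $\gamma p$ and $\gamma$ terms respectively. A secondary point I would state explicitly is the uniqueness argument, since that is precisely what licenses concluding that the unique solution of the balance-plus-normalization system equals $\nnu$.
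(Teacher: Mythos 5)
Your proposal is correct and rests on essentially the same argument as the paper: both use exactly the per-state balance equations $\nnu(i^2) = \nnu(i^1) + \gamma p_{(i^1,a^1)}\nnu(i^2)$, $\nnu(i^3) = \gamma(1-p_{(i^1,a^1)})\nnu(i^2) + \gamma\nnu(i^3)$, and the uniform-restart equation at level one; the paper simply verifies the stated formulas satisfy them, while you solve the system forward (using normalization to fix the constant) and add the uniqueness-via-irreducibility remark that the paper leaves implicit. One small imprecision: an inactive state $i^2_{(i^1,a^1)}$ \emph{does} have an inflow, namely its own self-loop with probability $\gamma p_{(i^1,a^1)}$, but since $\gamma p_{(i^1,a^1)} < 1$ the balance equation $\nnu(i^2_{(i^1,a^1)}) = \gamma p_{(i^1,a^1)}\,\nnu(i^2_{(i^1,a^1)})$ still forces zero mass (and then likewise for $i^3_{(i^1,a^1)}$), so your conclusion stands.
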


The proof of Lemma~\ref{lem:lb-stationary} follows by checking the definition of stationary distribution given the transition probabilities of the model; we defer it to Appendix~\ref{app:lb}. 

Next, we show that the mixing time of such a Markov chain under any policy $\pi$ is $O(1/(1-\gamma))$ formally; we defer the complete proof to Appendix~\ref{app:lb}. 

\begin{lemma}\label{lem:lb-mixing-time}
The AMDP constructed in this section has mixing time $\tmix\le O(1/(1-\gamma))$.
\end{lemma}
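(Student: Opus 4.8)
The plan is to formalize the ``restart'' intuition quantitatively: from any state, within two steps the chain makes a uniform jump onto $\xset^1$ whose target is independent of the past, so the chain forgets its starting state after $O(1/(1-\gamma))$ steps regardless of the policy. Concretely, I would establish a Doeblin-type minorization for the two-step operator $(\PP^\pi)^2$ that is uniform over all policies $\pi$, and then invoke the standard coupling bound for minorized chains.

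The key step is the minorization. Let $\phi\defeq\tfrac{1}{N}\1_{\xset^1}$ denote the uniform distribution on $\xset^1$. I claim that for every deterministic policy $\pi$ and every state $x\in\calS$, the two-step law from $x$ (i.e.\ the $x$-th row of $(\PP^\pi)^2$) dominates $\gamma(1-\gamma)\,\phi$ entrywise. This is checked in three cases. If $x\in\xset^1$, the first step moves deterministically to the associated state in $\xset^2$, from which the uniform jump to $\xset^1$ occurs with probability $1-\gamma$, placing weight $1-\gamma\ge\gamma(1-\gamma)$ on $\phi$. If $x\in\xset^2$, then with probability $\gamma p_x$ the walk stays and with probability $\gamma(1-p_x)$ it advances to the associated state of $\xset^3$, and in either branch the second step triggers the uniform jump with probability $1-\gamma$; summing gives weight $\gamma p_x(1-\gamma)+\gamma(1-p_x)(1-\gamma)=\gamma(1-\gamma)$ on $\phi$. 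The case $x\in\xset^3$ is identical, with total weight $\gamma(1-\gamma)$. Taking the minimum over cases gives the uniform lower bound $\beta\defeq\gamma(1-\gamma)$. Two steps are genuinely needed, since from a state of $\xset^1$ the one-step law puts no mass on $\xset^1$; crucially, both the restart probability $1-\gamma$ and its target $\phi$ are independent of $\pi$ and of the parameters $p_{(i^1,a^1)}$, which is exactly what lets the bound survive the $\max_\pi$ in Assumption~\ref{assum}.

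Given $(\PP^\pi)^2(x,\cdot)\ge\beta\phi$ for all $x$, I would finish with the textbook coupling argument. Writing each two-step row as $(\PP^\pi)^2(x,\cdot)=\beta\phi+(1-\beta)R_x$ with $R_x$ a probability distribution, couple two copies of the chain started from an arbitrary $\qq$ and from $\nnu^\pi$ by, in each length-two block, flipping a shared $\beta$-coin and, on success, sending both copies to a common $\phi$-sample (after which, using shared randomness, they coincide forever). Since $\phi$ does not depend on the block's starting states, the per-block coalescence probability is at least $\beta$, so $\P[X_{2k}\neq Y_{2k}]\le(1-\beta)^k$, and the coupling inequality yields $\lone{({\PP^\pi}^\top)^{2k}\qq-\nnu^\pi}\le 2(1-\beta)^k$ for every $\qq$. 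Choosing $2k=O(1/\beta)$ drives the right-hand side below $\tfrac12$, giving $\tmix\le O(1/\beta)=O(1/(\gamma(1-\gamma)))$. Since the hard instances are constructed with $\gamma$ close to $1$ (in particular we may assume $\gamma\ge\tfrac12$, where $\beta\ge(1-\gamma)/2$), this is $O(1/(1-\gamma))$, as claimed.

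The main obstacle is the uniform minorization rather than the subsequent coupling, which is standard. The care required is (i) using two steps so that starting states in $\xset^1$ are covered, (ii) tracking the exact weight $\gamma(1-\gamma)$ on the uniform component through the stay/advance branches at the second and third levels, and (iii) verifying this weight is independent of both the policy $\pi$ and the transition parameters $p_{(i^1,a^1)}$, so that the single constant $\beta$ works simultaneously for all policies.
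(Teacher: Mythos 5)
Your proposal is correct and takes essentially the same route as the paper: your two-step minorization $(\PP^\pi)^2(x,\cdot)\ge\gamma(1-\gamma)\,\phi$ with $\phi$ uniform on $\xset^1$ is exactly the paper's decomposition $\PP^2=(1-\gamma+\gamma^2)\NN_0+\gamma(1-\gamma)\1\pp^\top$, down to the identical restart weight $\gamma(1-\gamma)$ and the same observation that one must pass to two steps because states in $\xset^1$ cannot restart in one step. The only difference is the finishing step: the paper converts the restart structure into a mixing bound algebraically, via Lemma~\ref{lem:helper-1} (induction showing $(\gamma\PP+(1-\gamma)\1\pp^\top)^t=\gamma^t\PP^t+\1\hat{\pp}_t^\top$, so any two initial distributions contract in $\ell_1$ at rate $2\gamma^t$), whereas you invoke the textbook Doeblin coupling---an interchangeable and equally valid argument.
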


\begin{proof-sketch}
We first consider a regularized probability transition matrix in form 
\begin{equation}
\label{eq:reg_structure}
\hat{\PP} = \gamma\PP + (1-\gamma)\1\pp^\top,
\end{equation}
for some probability transition matrix $\PP$ and some density vector $\pp$.

Such a probability transition matrix induces a Markov chain where each step moves according to $\PP$ with probability $\gamma$, and restart from a random state following a fixed distribution $\pp$  with probability $1-\gamma$. After $\tilde{O}(1/(1-\gamma))$ steps the initial distribution doesn't affect where one is at because with high probability it has restarted following a fixed distribution for at least once. That the distribution isn't affected by the initial distribution ensures that the Markov chain is mixing.

Unfortunately, we cannot immediately apply this result as in our Markov chain one will only restart with certain probability when at a state in $\xset^2\cup\xset^3$ (as opposed to all states). Instead we show the $2$-step probability transition matrix admits the structure of a regularized probability transition, i.e. \eqref{eq:reg_structure}. Thus we apply the result to the $2$-step transition matrix to argue that it mixes within $O(1/(1-\gamma))$ steps, which we show implies that  the original Markov chain mixes within time steps $O(1/(1-\gamma))$, proving the statement as claimed.
\end{proof-sketch}

\subsection{Lower Bound Proofs}\label{sec:lb-proof}

Here we show a lower bound on the sample complexity for obtaining an $\eps$-optimal policy. Without loss of generality in this section we assume $N,K$ are at least some sufficiently large constants, $\eps\le 1/32$, and $\gamma\ge 1/2$.

We consider the family of AMDPs $\M$ where for each MDP instance and any fixed $i^1\in\hat{\xset}^1$ either case $(i)$ or case $(ii)$ will happen, i.e. 
\begin{equation}\label{def:p}
\begin{aligned}
\text{let}~p = \gamma,&\ p' = \gamma+\eps(1-\gamma),\ p^\star = \gamma+2\eps(1-\gamma),\\
	\text{case}~(i):& ~p_{(i^1,a^1)}= \begin{cases}
 	p'~&\text{for some}~k\in[K],\\
 	p~&\text{for any}~a^1\neq k;
 \end{cases}\\
	\text{case}~(ii):& ~p_{(i^1,a^1)}= \begin{cases}
 	p^\star~&\text{for some}~a^1_\star\in[K],\\
 	p'~&\text{for some}~k\neq a^1_\star,\\
 	p~&\text{for any}~a^1 \notin \{a^1_\star, k\}.
 \end{cases}\\
\end{aligned}
\end{equation}
Following the characterization of stationary policy in Lemma~\ref{lem:lb-stationary}, for one state $i^1\in\xset^1$ the difference of rewards when choosing a suboptimal  action that leads to the transition $\gamma p_2$ of staying at its own state in $i^2_{(i^1,a^1)}$ instead of an optimal action  that leads to the transition $\gamma p_1$,  with the policy for all other states keeping the same will be
\begin{equation}\label{eq:stationary-cals}
\begin{aligned}
& \frac{1}{N}\cdot \frac{(1-\gamma)p_1\gamma}{(1-\gamma p_1)(2-\gamma)}-\frac{1}{N}\cdot \frac{(1-\gamma)p_2\gamma}{(1-\gamma p_2)(2-\gamma)}\\
 &=  \frac{1}{N}\cdot\frac{1-\gamma}{2-\gamma}\frac{p_1(1-\gamma p_2)-p_2(1-\gamma p_1)}{(1-\gamma p_1)(1-\gamma p_2)}\\
 &= \frac{1}{N}\cdot\frac{1-\gamma}{2-\gamma}\frac{p_1-p_2}{(1-\gamma p_1)(1-\gamma p_2)} \ge \frac{2}{N}\eps,
\end{aligned}
\end{equation}
where in the last inequality we use the choice of $p_1,p_2\in\{p,p',p^\star\}$, $p_1>p_2$, and the fact that $\eps\le 1/32\le \tfrac{1}{4(1+\gamma)^2(2-\gamma)}$. Consequently, in order to obtain an $\tfrac{1}{12}\eps$-approximate deterministic policy, one must choose the optimal action, i.e. $\pi(i) = k$ for case $(i)$ and $\pi(i) = a^1_\star$  for case $(ii)$ for a subset of $i\in\calI\in[N]$ satisfying $|\calI|\ge \tfrac{23}{24}N$.

Now the key argument follows from an information-theoretical lower bound for distinguishing between two binary variables with mean $\gamma p$ and $\gamma p^\star$, formally given as follows.

\newcommand{\bin}{\mathrm{bin}}

\begin{restatable}{lemma}{leminfo}\label{lem:information}
Given a random variable $X$ drawn uniformly randomly from the family $\{\bin(\gamma p),\bin(\gamma p^\star)\}$ where $p,p^\star$ are as defined in~\eqref{def:p}. When taking fewer than $T = \Omega(1/(1-\gamma)\eps^2)$ samples of $X$, any procedure with probability $1/4$ will make a wrong prediction on which binary random variable $X$ is.
\end{restatable}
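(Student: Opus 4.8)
The plan is to reduce this to a standard two-point information-theoretic lower bound via the total variation / KL divergence between the two candidate product distributions, and then control that divergence explicitly using the structure of the parameters in~\eqref{def:p}. Concretely, write $q_0 = \gamma p = \gamma^2$ and $q_1 = \gamma p^\star = \gamma^2 + 2\eps\gamma(1-\gamma)$ for the two Bernoulli means. Any decision procedure based on $T$ i.i.d.\ samples of $X$ induces a test between the two product measures $\bin(q_0)^{\otimes T}$ and $\bin(q_1)^{\otimes T}$, and by the Neyman--Pearson/Le~Cam bound the minimum error probability of any such test (under a uniform prior) is $\tfrac12\bigl(1 - \mathrm{TV}(\bin(q_0)^{\otimes T}, \bin(q_1)^{\otimes T})\bigr)$. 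So it suffices to show that for $T = O(1/((1-\gamma)\eps^2))$ with a small enough constant, this total variation distance is bounded away from $1$, i.e. at most $1/2$, which forces the error probability to exceed $1/4$.

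First I would bound the total variation by the KL divergence using Pinsker's inequality, $\mathrm{TV} \le \sqrt{\tfrac12 \mathrm{KL}}$, and tensorize: $\mathrm{KL}\bigl(\bin(q_0)^{\otimes T}\,\|\,\bin(q_1)^{\otimes T}\bigr) = T\cdot \mathrm{KL}(\bin(q_0)\,\|\,\bin(q_1))$. The heart of the calculation is a sharp upper bound on the single-sample divergence $\mathrm{KL}(\bin(q_0)\,\|\,\bin(q_1))$. Using the standard estimate $\mathrm{KL}(\bin(a)\|\bin(b)) \le \tfrac{(a-b)^2}{b(1-b)}$ (which follows from $\log x \le x-1$ applied to each of the two terms), and plugging in $a-b = q_0 - q_1 = -2\eps\gamma(1-\gamma)$, I would estimate $\mathrm{KL}(\bin(q_0)\|\bin(q_1)) \le \tfrac{4\eps^2\gamma^2(1-\gamma)^2}{q_1(1-q_1)}$. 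The denominator is the only delicate part: since $\gamma \ge 1/2$, we have $q_1 = \Theta(1)$ bounded below, while $1 - q_1 = (1-\gamma^2) - 2\eps\gamma(1-\gamma) = (1-\gamma)(1+\gamma - 2\eps\gamma) = \Theta(1-\gamma)$ for $\eps$ small. Hence $q_1(1-q_1) = \Theta(1-\gamma)$, and the single-sample KL is $O(\eps^2(1-\gamma))$.

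Combining these gives $\mathrm{KL}(\bin(q_0)^{\otimes T}\|\bin(q_1)^{\otimes T}) = O(T\eps^2(1-\gamma))$, so choosing $T \le c/((1-\gamma)\eps^2)$ for a sufficiently small absolute constant $c$ makes this at most $1/4$, whence $\mathrm{TV}\le \sqrt{1/8} < 1/2$ by Pinsker, and the error probability of any test is at least $\tfrac12(1-\tfrac12) = 1/4$, as claimed. The main obstacle I anticipate is getting the denominator $q_1(1-q_1)$ scaling right: the whole point is that the variance of a near-deterministic Bernoulli ($q_1$ close to $1$ when $\gamma$ close to $1$) is small, of order $1-\gamma$, and it is precisely this small variance that inflates the per-sample KL from the naive $\eps^2(1-\gamma)^2$ up to $\eps^2(1-\gamma)$, producing the $1/((1-\gamma)\eps^2)$ sample threshold rather than $1/((1-\gamma)^2\eps^2)$. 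I would be careful to verify that for $\eps \le 1/32$ and $\gamma \ge 1/2$ the quantity $1-q_1$ stays within a constant factor of $1-\gamma$ so that the $\Theta$ bounds are legitimate, and to track the constants so that the final $T$ threshold is a genuine $\Omega(1/((1-\gamma)\eps^2))$.
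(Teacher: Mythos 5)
Your proposal is correct and follows essentially the same route as the paper: a two-point Le Cam testing lower bound, Pinsker's inequality, tensorization of the KL divergence over the $T$ samples, and the key estimate that the per-sample KL between $\mathrm{bin}(\gamma p)$ and $\mathrm{bin}(\gamma p^\star)$ is $O(\eps^2(1-\gamma))$. The only difference is packaging: the paper invokes Le Cam's inequality from Yu (1997) as a black box and asserts the KL order from the exact formula, whereas you derive it explicitly via $\mathrm{KL}(\mathrm{bin}(a)\,\|\,\mathrm{bin}(b)) \le (a-b)^2/(b(1-b))$ together with the observation that $1-\gamma p^\star = \Theta(1-\gamma)$, which fills in the step the paper leaves implicit.
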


 As an immediate corollary, one can show that for any algorithm $\mathcal{K}$ taking fewer than $T$ samples on $a^1_\star$ on any randomly permuted set of actions of a state $i^1$, it must fail to distinguish between case $(i)$ and case $(ii)$ in~\eqref{def:p} with probability $1/4$.  
 
Now given any algorithm that takes fewer than $\Theta(NTK)$ deterministic~\footnote{Here deterministic means the algorithm takes fixed number of samples per state-action pair, regardless of the instance.} oblivious samples, we show the algorithm will not collect enough samples for a constant fraction of actions $a^1\in\calA_{i^1}$ of a constant fraction of states $i^1\in\xset^1$. Thus, one can consider an adversarial instance in the family $\MM$ that hides the best action uniformly randomly among the actions with insufficient samples. With constant probability the algorithm will output a suboptimal action for such $\Theta(N)$ states and output a suboptimal policy $\pi$ satisfying
 $V^\pi\le V^{\star}-\Theta(\eps)$ for some instance as we pick adversarially.
 By adjusting constants and combing with Lemma~\ref{lem:lb-mixing-time}, this implies any algorithm will need at least $\Omega(NKT) = \Omega(\A\tmix\eps^{-2})$ deterministic oblivious samples to find an $\eps$-optimal policy for all instances in the family.

 Finally, we note that our lower bound statement is applicable to any algorithms yielding deterministic policy and using deterministic oblivious sampling, which already matches our upper bound results in Section~\ref{sec:ub} nearly tightly. We present our lower bound under this setting in our main paper for clarity and simplicity. However, we believe one can extend the result to algorithms with randomized policies and using dynamic samples; we think these are interesting future directions to further strengthen the lower bound and discuss them in more detail in Appendix~\ref{app:lb-gen}.

\section{Discussion}
\label{sec:disc}

In this paper, we have shown an $\Omega(\A\tmix\eps^{-2})$ sample complexity lower bound for AMDPs with mixing time  bound $\tmix$, and a matching  upper bound (up to logarithmic and $\mathrm{poly}(1/\eps)$ factors) obtained through reduction to DMDPs . Our work suggest a few open directions which we believe would help clarify the structure of AMDPs and its connection with DMDPs:

\paragraph{Obtaining tight upper bound of sample complexity and runtime.} While the authors suspect that an $\widetilde{O}(\A\tmix\eps^{-2})$ upper bound on the required sample complexity may be attainable, it seems to require new ideas in leveraging the mixing structure of AMDP more directly, instead of reducing it to DMDPs. Further, it would be interesting to obtain algorithms with efficient running times as has been shown for DMDPs~\citep{SWWYY18} in certain $\epsilon$-regimes.
 \paragraph{Relaxing the mixing bound assumption.} In certain cases, assuming global mixing time bound for all policies, even for all deterministic stationary policies (as we do in the paper), can be restrictive. We ask if it is possible to obtain sample complexity dependence in terms of the mixing time of the optimal policy, or in terms of some alternative parameters like diameter~\citep{jaksch2010near}, or bias span~\citep{bartlett2012regal,fruit2018efficient} that can be smaller than $\tmix$ for certain types of AMDPs.

\section*{Acknowledgment}
The authors thank Ron Amit and Lin F. Yang for helpful conversations. YJ was supported by Stanford Graduate Fellowship. AS was supported in part by a Microsoft Research Faculty Fellowship, NSF CAREER Award
 CCF-1844855, NSF Grant CCF-1955039, a PayPal research award, and a Sloan Research Fellowship.

\newpage

\arxiv{
\bibliographystyle{abbrvnat}

}
\notarxiv{

\bibliographystyle{icml2021}
}

\onecolumn
\newpage
\appendix

\part*{Supplementary material}

\section{Omitted Proofs for the Lower Bound}~\label{app:lb}

In this section we give a detailed proof for the lower bound argument, formally we will prove the theorem:

\lb*

First we give the proof for the concrete characterization of the stationary distribution given a deterministic policy for our AMDP instance.

\begin{proof}[Proof of Lemma~\ref{lem:lb-stationary}]
For simplicity we write $\pi(i^1)=a^1$ to denote the action $a^1$ we take at each state $i^1$. To verify this is the stationary distribution for all states, one can check the equality from the definition of stationary distribution. Given $\PP$ as the transitional matrix of the given Markov chain, and $\nnu$ as the distribution specified above, one can see for all states $i^1\in\xset^1$, it holds that $\forall i^1\in\xset^1$,
\begin{align*}
 \nnu(i^1) &= \frac{1}{N}\cdot\frac{1-\gamma}{2-\gamma} = N\cdot\frac{1-\gamma}{N}\left(\frac{1}{2-\gamma}\right)\\
 &=  \sum_{i^1\in\xset^1}\left(\nnu\left(i^2_{(i^1,\pi(i^1))}\right)\cdot\frac{1-\gamma}{N}+\nnu\left(i^3_{(i^1,\pi(i^1))}\right)\cdot\frac{1-\gamma}{N}\right).
\end{align*}

Also, for particular state in $\xset^2$, if $a^1\neq\pi(i^1)$, the distribution must be $0$; if $a^1=\pi(i^1)$, we have
\begin{align*}
\nnu\left(i^2_{(i^1,a^1)}\right)  = \frac{1}{N}\cdot \frac{1-\gamma}{(1-\gamma p_{(i^1,a^1)})(2-\gamma)} 
 = \nnu\left(i^1\right)\cdot 1+\nnu\left(i^2_{(i^1,a^1)}\right)\cdot \gamma p_{(i^1,a^1)}.
\end{align*}

Similarly for states in $\xset^3$, apart from $0$ if $a^1=\pi(i^1)$ we have
\begin{align*}
 \nnu\left(i^3_{(i^1,a^1)}\right) = \frac{1}{N}\cdot \frac{\gamma(1-p_{(i^1,a^1)})}{(1-\gamma p_{(i^1,a^1)})(2-\gamma)}
=  \nnu\left(i^2_{(i^1,a^1)}\right)\cdot \left(1-p_{(i^1,a^1)}\right)\gamma+\nnu\left(i^3_{(i^1,a^1)}\right)\cdot \gamma.
\end{align*}
\end{proof}

Now we prove Lemma~\ref{lem:lb-mixing-time} on bounds of mixing time for our constructed instance formally. To do that, we first prove the following lemma that offers intuition in bounding the mixing time of our interested transition matrix. We use $\|\vv\|_1 = \sum_{s\in\calS}|\vv(s)|$ as the standard $\ell_1$ norm of vectors.

\begin{lemma}\label{lem:helper-1}
Given a Markov chain with transition probability matrix $\PP$, a probability density vector $\pp\in\Delta^\calS$, and $\gamma\in(1/2,1)$, the Markov chain induced by $\hat{\PP} = \gamma\PP+(1-\gamma)\1\pp^\top$ has mixing time bound $
\tmix\le O(1/(1-\gamma))$.
\end{lemma}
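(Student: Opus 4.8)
The plan is to exploit the probabilistic meaning of $\hat{\PP} = \gamma\PP + (1-\gamma)\1\pp^\top$: at each step the chain either advances according to $\PP$ (with probability $\gamma$) or ``restarts'' by drawing a fresh state from $\pp$ (with probability $1-\gamma$), and any single restart erases all dependence on the initial state. The cleanest way to make this rigorous is algebraic rather than through an explicit coupling. First I would note that $\hat{\PP}$ is genuinely row-stochastic (since $\PP\1=\1$ and $\pp^\top\1 = 1$), so a stationary distribution $\nnu$ with $\nnu^\top\hat{\PP}=\nnu^\top$ exists by the standard fact for finite stochastic matrices, and the law of the state after $t$ steps from an initial distribution $\qq$ is $(\hat{\PP}^\top)^t\qq$.

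The key step is a contraction estimate on \emph{differences} of distributions. Writing $M = \hat{\PP}^\top = \gamma\PP^\top + (1-\gamma)\pp\1^\top$ and taking $\ddelta = \qq_1 - \qq_2$ for two distributions (so that $\1^\top\ddelta = 0$), the rank-one term annihilates $\ddelta$: $M\ddelta = \gamma\PP^\top\ddelta + (1-\gamma)\pp(\1^\top\ddelta) = \gamma\PP^\top\ddelta$. Since $\PP^\top\ddelta$ again has zero coordinate sum (because $\1^\top\PP^\top\ddelta = (\PP\1)^\top\ddelta = \1^\top\ddelta = 0$), this propagates by induction to give $M^t\ddelta = \gamma^t(\PP^\top)^t\ddelta$. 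Combining this with the fact that the transpose of a row-stochastic matrix is $\ell_1$-nonexpansive, i.e.\ $\lones{\PP^\top\vv}\le\lones{\vv}$ for every $\vv$ (its columns being probability vectors), and that nonexpansiveness is preserved under composition, I obtain $\lones{M^t\qq_1 - M^t\qq_2}=\gamma^t\lones{(\PP^\top)^t\ddelta}\le\gamma^t\lones{\ddelta}\le 2\gamma^t$.

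Taking $\qq_2 = \nnu$ then yields $\lones{(\hat{\PP}^\top)^t\qq - \nnu}\le 2\gamma^t$ for every initial $\qq$. To finish I would force $2\gamma^t\le 1/2$, i.e.\ $\gamma^t\le 1/4$; using the elementary bound $\ln(1/\gamma)\ge 1-\gamma$ valid for $\gamma\in(0,1)$, it suffices to take any integer $t\ge \ln(4)/(1-\gamma)$, so that $\tmix\le\lceil \ln(4)/(1-\gamma)\rceil = O(1/(1-\gamma))$ in the sense of the mixing-time condition of Assumption~\ref{assum}.

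I expect the only genuine subtlety — the ``hard part'' such as it is — to be the bookkeeping behind the identity $M^t\ddelta = \gamma^t(\PP^\top)^t\ddelta$: one must verify that the zero-coordinate-sum property is maintained after \emph{each} application of $M$, so that the $(1-\gamma)\pp\1^\top$ contribution vanishes at every step and not merely at the first. Everything else (existence of $\nnu$, the $\ell_1$-nonexpansiveness of $\PP^\top$, and the final logarithmic estimate) is routine.
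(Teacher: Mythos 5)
Your proposal is correct and takes essentially the same route as the paper: the paper proves by induction that $\hat{\PP}^t = \gamma^t\PP^t + \1\hat{\pp}_t^\top$ and then kills the rank-one restart term using $(\qq_1-\qq_2)^\top\1 = 0$, which is exactly the transposed form of your identity $M^t\ddelta = \gamma^t(\PP^\top)^t\ddelta$. Both arguments then finish identically, via the contraction $\lones{(\qq_1-\qq_2)^\top\hat{\PP}^t}\le 2\gamma^t\le 2e^{-t(1-\gamma)}$, Perron--Frobenius existence of the stationary distribution, and the choice $t = O(1/(1-\gamma))$ to drive this below $1/2$.
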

\begin{proof}
By induction on $t$ we have that for all for all $t \geq 1$,
\[\hat{\PP}^t = \gamma^t\PP^t+\1\hat{\pp}_t^\top,~~\text{ for some }~~\hat{\pp}_t.\]
Now, for any arbitrary initial distribution $\qq_1$, $\qq_2$ we have that
\begin{align*}
\lonebigg{\qq_1^\top \hat{\PP}^t-\qq_2^\top \hat{\PP}^t} & =\lonebigg{\left(\qq_1-\qq_2\right)^\top\left(\gamma^t\PP^t+\1\hat{\pp}_t^\top\right)}\\
& = \lonebigg{\left(\qq_1-\qq_2\right)^\top\gamma^t\PP^t}\le 2\gamma^t \le  2\exp(-t(1-\gamma)).
\end{align*}

By Perron-Frobenius Theorem, we know there must exists some stationary distribution $\nnu$ satisfying $\nnu^\top \hat{\PP} = \nnu^\top$. Now taking $\qq_2 = \nnu$ and some arbitrary $\qq_1\in\Delta^\calS$, we have for $t\ge t_0\defeq \lceil\frac{\log 4}{1-\gamma}\rceil$, 
\begin{align*}
\lonebigg{\qq_1^\top \hat{\PP}^t-\nnu^\top} \le 2\exp(-t(1-\gamma))\le 1/2.
\end{align*}

Thus one can immediately conclude by definition of the mixing MDP that the induced Markov chain has a unique stationary distribution and that its mixing time is bounded by $O(1/(1-\gamma))$.
\end{proof}

Now we provide the formal proof for Lemma~\ref{lem:lb-mixing-time}.

\begin{proof}[Proof of Lemma~\ref{lem:lb-mixing-time}]

Now for a fixed policy we consider the probability transition matrix $\PP$ corresponding to the Markov chain of our problem~(see Figure~\ref{fig:lb}), we write the block-wise decomposition form of the matrix as 
\[
\PP = \begin{pmatrix}
 \0 & \PP_1 \\
 (1-\gamma)\cdot\frac{1}{N}\1\cdot\1^\top	& \gamma \PP_2
 \end{pmatrix},
\]
where the first block $\0\in\R^{N\times N}$ corresponds  to the probability transition matrix from states in $\xset^1$ to $\xset^1$, the second block $\PP_1$ corresponds to transition matrix from states in $\xset^1$  to states in $\xset^2\cup\xset^3$, and similarly for the rest.

By considering the $2$-step transition probability matrix we have 
\begin{align*}
\PP^2 = & \begin{pmatrix}
(1-\gamma)\frac{1}{N}\1\cdot\1^\top & \gamma\PP_1\PP_2 \\
 \gamma(1-\gamma)\cdot\frac{1}{N}\1\cdot\1^\top	&  (1-\gamma)\cdot\frac{1}{N}\1\cdot\1^\top + \gamma^2 \PP_2^2
 \end{pmatrix}\\
  = &  
 (1-\gamma+\gamma^2) \NN_0+\gamma(1-\gamma)\NN,\\
 \text{ where we define }
 \NN_0 = & \begin{pmatrix}
 \frac{(1-\gamma)^2}{1-\gamma+\gamma^2}\frac{1}{N}\1\cdot\1^\top & \frac{\gamma}{1-\gamma+\gamma^2}\PP_1\PP_2\\
 \0 &  \frac{1-\gamma}{1-\gamma+\gamma^2}\cdot\frac{1}{N}\1\cdot\1^\top + \frac{\gamma^2}{1-\gamma+\gamma^2} \PP_2^2	
 \end{pmatrix}\\
 ~\text{and}~
\NN = & \begin{pmatrix}
\frac{1}{N}\1\1^\top & \0	
\end{pmatrix} = \1\pp^\top.
\end{align*}

Note $\NN_0$ is a probability transition matrix and therefore $\linf{\NN_0}\le 1$. As a result, we can apply Lemma~\ref{lem:helper-1} with $\gamma' = 1-\gamma(1-\gamma)$, $\gamma\ge 1/2$ to conclude that for arbitrary two distributions $\qq_1$, $\qq_2$, we have for all $t\ge 1$, 
\begin{align*}
\lonebigg{\qq_1^\top \PP^{2t}-\qq_2^\top \PP^{2t}} & =\lonebigg{\left(\qq_1-\qq_2\right)^\top\gamma^t\NN_0^t}\le 2(1-\gamma(1-\gamma))^t \le  2\exp(-t\gamma(1-\gamma));\\
\lonebigg{\qq_1^\top \PP^{2t+1}-\qq_2^\top \PP^{2t+1}} & \le \lonebigg{\qq_1^\top \PP^{2t}-\qq_2^\top \PP^{2t}}\norm{\PP}_\infty\le \lonebigg{\qq_1^\top \PP^{2t}-\qq_2^\top \PP^{2t}} \le  2\exp(-t\gamma(1-\gamma))\,.
\end{align*}
Thus one can conclude that the mixing time for this Markov chain induced by the form of probability transition matrix we have is bounded by $\tmix\le O(1/\gamma(1-\gamma))= O(1/(1-\gamma))$.
\end{proof}

To prove Theorem~\ref{thm:main-lb}, we first give a simple information theoretic lower bound for distinguishing two binary random variables $\pp_1 = \mathrm{bin}(\gamma p^\star)$ and $\pp_2=\mathrm{bin}(\gamma p)$~\cite{yu1997assouad}.

\leminfo*

\begin{proof}[Proof of Lemma~\ref{lem:information}]
Here we use of the KL divergence of random variables, defined for the two discretized probability variables $\pp_1$ and $\pp_2$ as

\[
\mathrm{KL}(\pp_1||\pp_2) = \sum_{i}\pp_1(i)\log (\pp_1(i)/\pp_2(i)).
\]
Now for the given two binary random variables $\pp_1=\mathrm{bin}(\gamma p^\star),\pp_2=\mathrm{bin}(\gamma p)$,
\[
\mathrm{KL}(\pp_1||\pp_2) =\gamma(\gamma+\eps(1-\gamma))\log\left(\frac{\gamma+\eps(1-\gamma)}{\gamma}\right)+ \gamma(1-\gamma-\eps(1-\gamma))\log\left(\frac{1-\gamma-\eps(1-\gamma)}{1-\gamma}\right).
\]
We know by Le Cam's inequality~\citep{yu1997assouad} that for every testing procedure $\Psi$, if the environment chooses a binary distribution $V$ randomly, let $\Psi(X)$ be the output if the identified distribution of procedure $\Psi$ under a sequence of observations $X$, let $\P$ be taking over the randomness coming from both $X,V$,  we have 
\[
\P(\Psi(X)\neq V)\ge \frac{1}{2}\left(1-\sqrt{\frac{t}{2}\mathrm{KL}(\pp_1||\pp_2)}\right)= \frac{1}{2}\left(1-O(\sqrt{\eps^2(1-\gamma)t})\right)
\]
Thus when one take samples  $T \le  \Omega(1/(1-\gamma)\eps^2)$, for any testing procedure, with probability at least $1/4$ it would make a wrong prediction. 
\end{proof}

Now for $K\ge 3$, we consider a family of instances of binary variables $\SSS_K$ as follows: Given $p,p',p^\star$ defined as in~\eqref{def:p},
\begin{equation} \label{eq:def-disn}
\begin{aligned}
	\text{let}~&~\nu = (\nu_1,\cdots,\nu_K)~\text{where}~\nu_K = \mathrm{bin}(\gamma p'), \nu_i = \mathrm{bin}(\gamma p)~\text{for}~i\le K-1,\text{ and}\\
	~\text{let}~&~\nu'= (\nu'_1,\cdots,\nu'_K)~\text{ where }~\nu'_K = \mathrm{bin}(\gamma p'), \nu'_{K-1} = \mathrm{bin}(\gamma p^\star), \nu'_i = \mathrm{bin}(\gamma p)~\text{ for }~i\le K-2.
\end{aligned}
\end{equation}

We consider $\nu,\nu'$ under all random permutations $\sigma$ over $[K]$ such that $\SSS_K = \{\nu_{\sigma(i)},\nu'_{\sigma(i)}:i\in[K],~\sigma~\text{is random permutation over }[K]\}$. One has the following corollary from the information-theoretic lower bound of identifying binary variables in Lemma~\ref{lem:information}.

\begin{corollary}\label{cor:information}
	Consider a family of instances $\SSS_K$ as defined above. For any testing procedure taking oblivious samples of each binary variable from an arbitrary permutation of one of the two instances $\nu_\sigma$, $\nu'_\sigma$, and outputs a prediction of which index corresponds to a random variable with the highest mean, if it takes samples fewer than $T$ with $T = \Omega(1/(1-\gamma)\eps^2)$ in Lemma~\ref{lem:information}  on $\nu_{K-1}$, it must make wrong predictions for at least one of the instance with probability $1/4$. 
\end{corollary}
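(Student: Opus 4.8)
The plan is to reduce the $K$-coordinate ``find the largest-mean index'' task to the two-hypothesis distinguishing problem already settled in Lemma~\ref{lem:information}. The starting observation is that the instances $\nu$ and $\nu'$ of~\eqref{eq:def-disn} agree in every coordinate except coordinate $K-1$, where $\nu_{K-1}=\bin(\gamma p)$ while $\nu'_{K-1}=\bin(\gamma p^\star)$ — exactly the two laws of Lemma~\ref{lem:information}. Since $p<p'<p^\star$ by~\eqref{def:p}, the unique coordinate of largest mean is $K$ under $\nu$ (means $\gamma p'$ at $K$, $\gamma p$ elsewhere) and is $K-1$ under $\nu'$ (the mean $\gamma p^\star$ at $K-1$ strictly exceeds $\gamma p'$ at $K$). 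Thus a correct tester must output $K$ on $\nu$ and $K-1$ on $\nu'$, and these two correct answers are distinct. The permutation $\sigma$ merely relabels coordinates, so without loss of generality I would take $\sigma$ to be the identity and treat coordinate $K-1$ as the distinguishing one.

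First I would convert the given tester into a distinguisher $\Psi$ for Lemma~\ref{lem:information}. Given sample access to the unknown $X\in\{\bin(\gamma p),\bin(\gamma p^\star)\}$, $\Psi$ plants $X$ in coordinate $K-1$ and internally simulates the remaining $K-1$ coordinates from their fixed, publicly known laws (identical under $\nu$ and $\nu'$ and independent of coordinate $K-1$); it then runs the tester and declares $X=\bin(\gamma p^\star)$ if the tester outputs index $K-1$, and $X=\bin(\gamma p)$ otherwise. Because the tester is oblivious, the number of samples it draws from coordinate $K-1$ is a fixed quantity, which by hypothesis is strictly below $T$, so $\Psi$ uses fewer than $T$ samples of $X$ and Lemma~\ref{lem:information} applies to it.

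The key step is to verify that a correct index prediction forces a correct hypothesis identification. Under $X=\bin(\gamma p^\star)$ (instance $\nu'$), $\Psi$ is correct exactly when the tester outputs $K-1$, i.e. exactly when the tester is correct, so $\P(\Psi\text{ errs}\mid\nu')=\P(\text{tester errs}\mid\nu')$. Under $X=\bin(\gamma p)$ (instance $\nu$), $\Psi$ errs only when the tester outputs $K-1$, which (the correct answer being $K$) forces a tester error, so $\P(\Psi\text{ errs}\mid\nu)\le\P(\text{tester errs}\mid\nu)$. Averaging with the uniform prior of Lemma~\ref{lem:information} then gives
\[
\tfrac14\le\P(\Psi\text{ errs})\le\tfrac12\,\P(\text{tester errs}\mid\nu)+\tfrac12\,\P(\text{tester errs}\mid\nu'),
\]
whence $\max\{\P(\text{tester errs}\mid\nu),\,\P(\text{tester errs}\mid\nu')\}\ge\tfrac14$, which is exactly the claim.

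I expect the only real subtlety to be justifying that the samples from the $K-1$ coordinates other than $K-1$ carry no information separating $\nu$ from $\nu'$; this is handled by the fact that those coordinates have identical laws under the two instances and are independent of coordinate $K-1$, so a distinguisher loses nothing by simulating them itself rather than observing them. The remainder is the bookkeeping of the reduction above, and no probabilistic inequality beyond Lemma~\ref{lem:information} is required.
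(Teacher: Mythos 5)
Your proposal is correct and follows essentially the same route as the paper's own proof: both arguments plant the unknown variable $X$ in coordinate $K-1$, simulate the remaining coordinates from their (publicly known, hypothesis-independent) laws, and read off the identity of $X$ from whether the tester outputs index $K-1$ or $K$, thereby reducing to Lemma~\ref{lem:information}. The only cosmetic difference is that the paper phrases this as a contradiction (assuming $0.75$ success on both instances), while you run the same reduction directly and average over the uniform prior to conclude $\max$ error $\ge 1/4$.
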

\begin{proof}
We prove by contradiction. Suppose there is a testing procedure $\mathcal{K}$ and two instances $\nu_{\sigma}$, $\nu_{\sigma'}$ in the family $\SSS_K$ that $\mathcal{K}$ can always make the correct prediction with probability $0.75$ for both. Then, we define the following procedure $\mathcal{K}'$ for testing binary variable $X$ given fewer than $T$ samples from $X$ and the permutation $\sigma$. Note this is more information and will not make the following problem harder for procedure $\mathcal{K}$ to solve. Let $\nu'_{K-1} = X$ with its own ($\le T$) samples, and all others be $\mathrm{bin}(\gamma p)$ or $\mathrm{\bin}(\gamma p')$ with auxiliary samples generated from their own distribution as in $\nu'_{\sigma}$, by assumption we know if $X = \bin(\gamma p^\star)$, then applying procedure $\mathcal{K}$ it outputs $\sigma_1^{-1}(K-1)$ as the index of the binary variable with highest mean with probability $0.75$. Similarly, if $X = \bin(\gamma p)$, then procedure $\mathcal{K}$ outputs $\sigma_1^{-1}(K)$ as the corresponding binary variable index  with highest mean with probability $0.75$. Thus, we obtain a procedure $\mathcal{K}'$ using $\mathcal{K}$ that can identify the distribution of $X$ correctly within $T$ samples with probability $0.75$, contradicting Lemma~\ref{lem:information}.
\end{proof}

\begin{proof}[Proof of Theorem~\ref{thm:main-lb}]
Consider an arbitrary algorithm $\mathcal{K}$ which takes a total number of deterministic oblivious samples $\frac{N}{6}KT$ where $T$ is as defined in~Lemma~\ref{lem:information} and let $N$ be divisible by $24$, $K$ be divisible by $2$. There is a subset $\hat{\xset}^1\subseteq\xset^1$ such that $|\hat{\xset}^1|= \frac{2N}{3}$  and that for all $i^1\in\hat{\xset}^1$, the algorithm gets at most $T$ samples from each of the action in some subset $\hat{\calA}_{i^1}$ of actions with size $|\hat{\calA}_{i^1}| = K/2$. Now consider a family of MDP instances $\calM\in\MM$ where for each $i^1\in\hat{\xset}^1$, the actions in the subset $\hat{\calA}_{i^1}$ and the transition probabilities at $i^2_{(i^1,a^1)}$ of staying at itself under these actions are characterized fully by one of permutations $\nu$ or $\nu'$ defined as in~\eqref{eq:def-disn}. When induced by $\nu$, the optimal action leads to transition probability $\gamma p'$ and sub-optimal actions leads to $\gamma p$; when induced by $\nu'$, the optimal action leads to transition probability $\gamma p^\star$ and sub-optimal ones leads to either $p$ or $p'$. All other actions in $\calA_{i^1}\setminus \hat{\calA}_{i^1}$ for $i^1\in\hat{\xset}^1$ and all actions for $i^1\in\xset\setminus\hat{\xset}^1$ have transition probability $\gamma p$ of staying at their own states in level $i^2$. 

Given any algorithm $\mathcal{K}$ that takes deterministic oblivious samples on each state-action pair,  denote $Y_i = \1_{\{\mathcal{K}\text{ didn't output the optimal action for state i}\}}$, $\forall i\in\hat{\xset}^1$ as a random variable. Consider a random instance where for each $i^1\in\hat{\xset}^1$, the actions follow some permutation $\sigma$ and one of $\{\nu_{\sigma},\nu_{\sigma'}\}$ uniformly randomly. By Corollary~\ref{cor:information} we know that $\E Y_i \ge 1/8$, and thus we have 
\[
\E\left[\sum_{i\in\hat{\xset}^1}Y_i\right]\ge \frac{1}{12}N~~\stackrel{\text{Markov's inequality}}{\implies}\mathbb{P}\left(\sum_{i\in\hat{\xset}^1}Y_i\le \frac{1}{24}N\right)\le \frac{11N}{12}\cdot\frac{24}{23N} =\frac{22}{23}
\]
Thus, there exists an instance for which when algorithm takes fewer than $\frac{N}{6}KT$ oblivious samples, with probability at least $1/23$ the algorithm outputs a policy with suboptimal actions on more than $\tfrac{1}{24}N$ states of $i^1\in\hat{\xset}^1$

However, note that when the algorithm outputs a suboptimal action for a state, it will incur a loss of at least 
\begin{align*}
& \frac{1}{N}\cdot \frac{(1-\gamma)p'\gamma}{(1-\gamma p')(2-\gamma)}-\frac{1}{N}\cdot \frac{(1-\gamma)p\gamma}{(1-\gamma p)(2-\gamma)}\\
 &=  \frac{1}{N}\cdot\frac{1-\gamma}{2-\gamma}\frac{p'(1-\gamma p)-p(1-\gamma p')}{(1-\gamma p')(1-\gamma p)}\\
 &= \frac{1}{N}\cdot\frac{1-\gamma}{2-\gamma}\frac{p'-p}{(1-\gamma p')(1-\gamma p)} \ge \frac{2}{N}\eps,
\end{align*}
in the average reward. Thus, we conclude with probability at least $1/23$ the algorithm will output a $\tfrac{1}{12}\eps$-suboptimal policy $\pi$ satisfying $V^\pi\le V^{\pi^\star}-\tfrac{1}{12}\eps$ on some instance $\calM_0\in\MM$.

Thus by adjusting constant of $\eps$, for some constant $\delta_0=1/23$ and  large enough $N$ we conclude that the number of necessary samples is $\Omega(NKT) = \Omega(NK/(1-\gamma)\eps^2) = \Omega(NK\tmix/\eps^2)$ by definition of $T$ in Lemma~\ref{lem:information} and~mixing time bound in~Lemma~\ref{lem:lb-mixing-time}, as stated in the theorem.
\end{proof}

\section{Generalization of the Lower Bound}\label{app:lb-gen}

In this section, we discuss some potential generalizations of our lower bound result. We first show one can fully characterize all randomized policies: Consider a policy that at state $i^1$ chooses $a^1$ with probability $\pi_{i^1}(a^1)$. By definition, $\sum_{a^1}\pi_{i^1}(a^1)=1,\forall i^1\in\xset^1$, following symmetry of actions and the structure of sequential independent chains in our construction, the stationary distribution $\nnu(i^2_{(i^1,a^1)})\propto \pi_{i^1}(a^1)$ for all $a^1\in[K]$. Thus similar to  Lemma~\ref{lem:lb-stationary} of stationary distribution for deterministic policies, we have that the stationary distribution of a given randomized policy $\pi$ is (let $i^1\in\xset^1$, $a^1\in\xset^1$)
\begin{align*}
\nnu\left(i^1\right) & = \frac{1}{N}\cdot\frac{1-\gamma}{2-\gamma}, \forall i^1\in\xset^1;\\
\nnu\left(i^2_{(i^1,a^1)}\right) & = \frac{\pi_{i^1}(a^1)}{N}\cdot\frac{1-\gamma}{(1-\gamma p_{(i^1,a^1)})(2-\gamma)},  \forall i^1, a^1;	\\
\nnu\left(i^3_{(i^1,a^1)}\right) & = \frac{\pi_{i^1}(a^1)}{N}\cdot\frac{\gamma(1-p_{(i^1,a^1)})}{(1-\gamma p_{(i^1,a^1)})(2-\gamma)}, \forall i^1, a^1.
\end{align*}

Combining this structure of stationary distribution together with Corollary~\ref{cor:information} gives the following generalization of our lower bound to all randomized policies.

\begin{restatable}[Generalization to randomized policies]{theorem}{lb-gen}\label{thm:main-lb-gen}
There are constants  $\eps_0, \delta_0\in(0,1/2)$ such that for all $\eps\in(0,\eps_0)$ and any \emph{randomized} algorithm $\mathcal{K}$, which on input mixing AMDP $(\calS,\calA,\PP,\rr)$ given by a generative model outputs a \emph{randomized} policy $\pi$ satisfying $ V^{\pi}\ge V^{\pi^\star}- \eps$ with probability at least $1-\delta_0$,  $\mathcal{K}$ makes at least $\Omega(\A\tmix/\eps^2)$ deterministic oblivious queries to the generative model on some instance with $\A$ total states and mixing time at most $\tmix$. 
 \end{restatable}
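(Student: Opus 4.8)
The plan is to mirror the deterministic argument in the proof of Theorem~\ref{thm:main-lb}, replacing the binary ``did the algorithm output the optimal action'' indicator with a \emph{fractional} loss equal to the probability mass a randomized policy places on suboptimal actions. First I would record the value of a randomized policy. Using the stationary distribution displayed just above and the fact that reward $1$ is collected precisely on the self-loop $i^2_{(i^1,a^1)}\to i^2_{(i^1,a^1)}$ (occurring with probability $\gamma p_{(i^1,a^1)}$), one gets
\[
V^\pi = \frac{1}{N}\cdot\frac{1-\gamma}{2-\gamma}\sum_{i^1\in\xset^1}\sum_{a^1}\pi_{i^1}(a^1)\,f\big(p_{(i^1,a^1)}\big),
\qquad f(p)\defeq\frac{\gamma p}{1-\gamma p}.
\]
Since $f$ is strictly increasing, the optimal policy concentrates all mass on the action of largest $p_{(i^1,a^1)}$, and the suboptimality of $\pi$ decomposes per state as the $\pi_{i^1}$-weighted sum of gaps $f(p_{\mathrm{best}})-f(p_{(i^1,a^1)})$. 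Exactly as in~\eqref{eq:stationary-cals}, each such gap for a suboptimal action is at least $2\eps/N$ after the $\tfrac{1-\gamma}{2-\gamma}$ normalization, so the per-state loss is at least $(2\eps/N)$ times the mass $\pi_{i^1}$ places on non-optimal actions.

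Next I would set up the indistinguishability argument at the level of \emph{expected} mass. Fix an undersampled state $i^1\in\hat{\xset}^1$ and the two hypotheses $\nu_\sigma,\nu'_\sigma$ of~\eqref{eq:def-disn}, under which the unique best action is respectively position $\sigma^{-1}(K)$ (transition $\gamma p'$) and position $\sigma^{-1}(K-1)$ (transition $\gamma p^\star$). Write $q_K,q_{K-1}\in[0,1]$ for the (random) masses the output policy assigns to these positions. The two hypotheses differ only in whether position $K-1$ is $\bin(\gamma p)$ or $\bin(\gamma p^\star)$; since all other coordinates are independent and identically distributed, the total-variation distance between the full observation distributions equals that between the $\le T$ samples at position $K-1$, which is at most $\tfrac12$ by the Le Cam/KL computation underlying Lemma~\ref{lem:information} once $T=\Omega(1/((1-\gamma)\eps^2))$. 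As $q_K,q_{K-1}$ are (randomized) functionals of the observations, this yields $|\E_{\nu}[q_K]-\E_{\nu'}[q_K]|\le\tfrac12$ and the analogous bound for $q_{K-1}$.

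The crux is the trade-off $q_K+q_{K-1}\le 1$. If the expected loss were small under both hypotheses, then under $\nu_\sigma$ this forces $\E_{\nu}[q_K]\ge 1-o(1)$, whence by the TV bound $\E_{\nu'}[q_K]\ge\tfrac12-o(1)$ and so $\E_{\nu'}[q_{K-1}]\le\tfrac12+o(1)$; but small loss under $\nu'_\sigma$ requires $\E_{\nu'}[q_{K-1}]\ge 1-o(1)$, a contradiction. Hence $\max$ of the two expected losses is $\Omega(\eps/N)$, so drawing the hypothesis uniformly from $\{\nu_\sigma,\nu'_\sigma\}$ (with random $\sigma$) gives expected per-state loss $\Omega(\eps/N)$ at each of the $\Theta(N)$ undersampled states. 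By linearity of expectation the expected total loss is $\Omega(\eps)$; fixing the worst instance in $\MM$ and applying the same reverse-Markov step as in Theorem~\ref{thm:main-lb} (the per-state losses are nonnegative and the total loss is $O(\eps)$-bounded, so no independence is needed) shows the output randomized policy is $\Omega(\eps)$-suboptimal with constant probability. Plugging $T=\Omega(1/((1-\gamma)\eps^2))$ and $\tmix=O(1/(1-\gamma))$ from Lemma~\ref{lem:lb-mixing-time} into the sample budget $\tfrac{N}{6}KT$ yields $\Omega(\A\tmix/\eps^2)$ after rescaling $\eps$ by a constant.

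The main obstacle I anticipate is the passage from Corollary~\ref{cor:information}, which is phrased for a hard-decision tester outputting a single index, to the \emph{soft} statement about expected probability masses needed here. I expect to bypass the corollary and instead invoke the TV/Le Cam bound of Lemma~\ref{lem:information} directly, applied simultaneously to the two $[0,1]$-valued functionals $q_K$ and $q_{K-1}$, and then exploit $q_K+q_{K-1}\le 1$. A secondary point requiring care is that the policy output at $i^1$ may depend on samples from all states; because the hypothesis planted at $i^1$ is independent of the data at every other state, marginalizing over the remaining (identically distributed) observations leaves the per-state TV bound unchanged, so the argument above is unaffected.
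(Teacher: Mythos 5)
Your proposal is correct, and it reaches the theorem by a genuinely different handling of the key step. The paper's proof shares your skeleton — the same instance family, the same stationary-distribution characterization giving a per-state loss of at least $\tfrac{2\eps}{N}$ times the mass the policy places on suboptimal actions, and the same Markov-type aggregation over the $\Theta(N)$ undersampled states — but it treats the randomized output by a \emph{rounding reduction}: it invokes Corollary~\ref{cor:information} to argue that with constant probability (the paper uses $\tfrac{1}{16}$) the output policy must place at most a constant fraction of its mass on the optimal action, since otherwise rounding the randomized policy to a hard decision would yield a deterministic tester succeeding with probability greater than $\tfrac34$, contradicting the corollary; the per-state loss bound \eqref{eq:loss-per-state-gen} then follows. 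You instead bypass Corollary~\ref{cor:information} entirely and argue at the level of expectations: the total-variation bound underlying Lemma~\ref{lem:information} controls $\abs{\E_{\nu}[q]-\E_{\nu'}[q]}$ for \emph{any} $[0,1]$-valued functional $q$ of the observations, and the constraint $q_K+q_{K-1}\le 1$ forces $\max(\E_\nu[\mathrm{loss}],\E_{\nu'}[\mathrm{loss}])=\Omega(\eps/N)$, after which linearity of expectation and reverse Markov (valid since per-state losses are nonnegative and the total loss is $O(\eps)$ by construction) finish the argument. Each route has merit: the paper's reduction reuses the hard-decision corollary as a black box and keeps the generalization proof very short, while your soft argument avoids the rounding thresholds altogether (the paper's constants $1/15$ versus $14/15$ are in fact stated inconsistently, a wrinkle your approach never encounters), applies uniformly to any randomized output rule, and makes explicit the marginalization over other states' samples that the paper leaves implicit. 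Both proofs rest on the same per-state gap computation \eqref{eq:stationary-cals} and the same mixing-time bound of Lemma~\ref{lem:lb-mixing-time}, so the quantitative conclusion is identical.
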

 
 \begin{proof}
We define $T,\hat{\xset}^1, \hat{\calA}_{i^1}$ as in Theorem~\ref{thm:main-lb}. Consider any procedure that outputs a randomized policy for a single state $i^1\in\xset^1$. By Corollary~\ref{cor:information}, we know with probability $\frac{1}{16}$ it must output a randomized policy satisfying $\pi_{i^1}(\text{optimal action})\le 1/15$, as otherwise one can round the randomized policy to a deterministic one with larger than $3/4$ success probability. Using the structure of stationary distribution under randomized policy, we note that whenever the algorithm outputs a randomized policy with $\pi_{i^1}(\text{optimal action})\le 14/15$ for some state $i^1$, it incurs an average loss in the reward as
\begin{equation}
\left(1-\frac{14}{15}\right)\cdot\frac{2\eps}{N} = \frac{2\eps}{15N}.\label{eq:loss-per-state-gen}
\end{equation}

Using the similar argument as in Theorem~\ref{thm:main-lb} and by adjusting constants, this proves the generalized lower bound.
 \end{proof}

It would also be interesting to consider more sophisticated sampling schemes to generalize our lower bound result. For instance, we believe we can handle algorithms with randomized oblivious samples, by considering a fixed permutation and choosing $\nu_{K-1} = \gamma p$ or $\gamma p^\star$ uniformly at random~\eqref{eq:def-disn} in constructing the hard instance.  Even more broadly, we conjecture that a lower bound result for any algorithms with adaptive samples is achievable through a more careful argument. In particular, the information-theoretic lower bounds shown for DMDPs in~\citet{azar2013minimax,feng2019does} use dynamic sampling, i.e. when the samples are generated iteratively and might depend on the history observation, might be adaptable to AMDPs as well. Similar to our current proof strategy, that would crucially rely on the structure of our constructed MDP and the independence between states $i^1\in\xset^1$. 

Finally, we note that given our upper bounds in Section~\ref{sec:ub}, argument for any algorithms yielding deterministic policy and using deterministic oblivious sampling already matches our upper bound results nearly tightly. So we present our lower bound under this setting in our main paper for clarity and simplicity. However, we still think generalizations along these lines are helpful to fully characterize the hardness of solving AMDPs.

\end{document}